\newcommand{\R}{\mathbb{R}}
\newcommand{\norm}[1]{\left\| {#1} \right\|}
\newcommand{\argmin}[1]{\underset{#1}{\text{argmin}}\;}
\newcommand{\argmax}[1]{\underset{#1}{\text{argmax}}\;}
\newtheorem{assumption}{\textbf{H}\hspace{-3pt}}
\newtheorem{theorem}{Theorem}
\newtheorem{lemma}{Lemma}
\newaliascnt{proposition}{theorem}
\newaliascnt{definition}{theorem}
\newaliascnt{corollary}{theorem}
\newaliascnt{remark}{theorem}
\crefname{proposition}{proposition}{propositions}
\Crefname{Proposition}{Proposition}{Propositions}
\crefname{definition}{definition}{definitions}
\Crefname{Definition}{Definition}{Definitions}
\crefname{corollary}{corollary}{corollaries}
\Crefname{Corollary}{Corollary}{Corollaries}
\crefname{example}{example}{examples}
\Crefname{Example}{Example}{Examples}
\crefname{theorem}{theorem}{Theorems}
\Crefname{Theorem}{Theorem}{Theorems}
\crefname{remark}{remark}{remarks}
\Crefname{Remark}{Remark}{Remarks}
\crefname{figure}{figure}{figures}
\Crefname{Figure}{Figure}{Figures}
\begin{document}
\runningtitle{Personalized CDL of Physiological Time Series}
\runningauthor{A. Roques, S. Gruffaz, K. Kim, A. O. Durmus, L. Oudre}

\twocolumn[
    \aistatstitle{Personalized Convolutional Dictionary Learning of\\Physiological Time Series}
    \aistatsauthor{ Axel Roques \And Samuel Gruffaz \And Kyurae Kim }
    \aistatsaddress{  Centre Borelli, \\ Laboratoire GBCM, Thales AVS \And Centre Borelli,\\  ENS Paris-Saclay  \And University of Pennsylvania } 
    \aistatsauthor{  Alain O. ~Durmus \And Laurent ~Oudre }
    \aistatsaddress{ CMAP, CNRS, Ecole polytechnique \And Centre Borelli,  ENS Paris-Saclay } 
]

\begin{abstract}
Human physiological signals tend to exhibit both global and local structures: the former are shared across a population, while the latter reflect inter-individual variability. 
For instance, kinetic measurements of the gait cycle during locomotion present common characteristics, although idiosyncrasies may be observed due to biomechanical disposition or pathology.
To better represent datasets with local-global structure, this work extends Convolutional Dictionary Learning (CDL), a popular method for learning interpretable representations, or \textit{dictionaries}, of time-series data.
In particular, we propose Personalized CDL (PerCDL), in which a \textit{local} dictionary models local information as a personalized spatiotemporal transformation of a  \textit{global} dictionary.
The transformation is learnable and can combine operations such as time warping and rotation.
Formal computational and statistical guarantees for PerCDL are provided and its effectiveness on synthetic and real human locomotion data is demonstrated.
\end{abstract}

\section{INTRODUCTION}
\label{sec:introduction}
Dictionary learning (DL; \citealp{kreutz2003dictionary,tovsic2011dictionary}) is an unsupervised representation learning method for decomposing data into two components: a \textit{dictionary}, \textit{i.e.}, a collection of reference signals, each called an \textit{atom}, and a set of mixing coefficients. 
Among the numerous variants of DL, Convolutional Dictionary Learning (CDL; \citealp{garcia2018convolutional}) assumes that the dataset can be synthesized with atoms of relatively smaller length repeated at different positions. 
CDL is a well-established method to represent various time-series modalities (\textit{e.g.}, audio signals; \citealp{grosse2007shift}) that excels at learning interpretable representations of human physiology~\citep{dupre2018multivariate, power2023using,chen2023parametric}: many human activities are structured and tend to exhibit periodic patterns, which can be captured by CDL.
During locomotion for instance---our illustrative example throughout this text---CDL provides a principled way to statistically infer the gait cycles directly from the data~\citep{whittle2014gait}, thereby circumventing the need for experts to craft a dictionary themselves (typically for template matching, \textit{cf.},~\citealp{ying2007automatic, mico2016novel, oudre2018template, dot2020non, voisard2023automatic}).

\begin{figure*}
\centering
  \includegraphics[width=\textwidth]{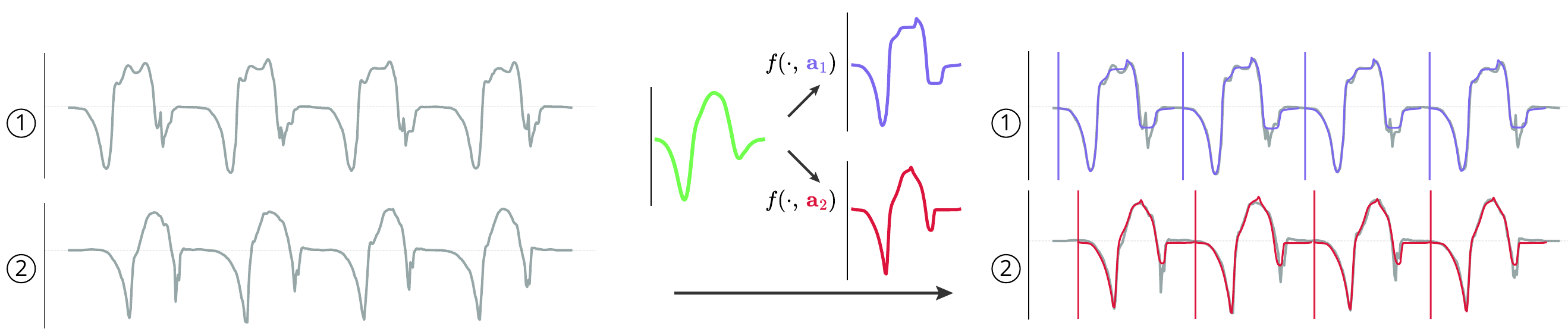}
\caption{
    \textbf{Illustration of PerCDL for the Analysis of Human Locomotion Data.}
    (left) The foot kinematic signals of two individuals \ding{172} and \ding{173} present a repeated structure called a gait cycle. 
    (middle) PerCDL learns the shared structures and subject-specific variability: the common shape (green) is ``personalized'' using a transformation function $f$ (parameterized by the personalization parameters $\bm{a}_1$ and $\bm{a}_2$). 
    (right) PerCDL successfully identifies all gait cycles and signal-specific shapes (blue and red), resulting in an accurate reconstruction.
}
\label{fig:illustration}
\vspace{-2ex}
\end{figure*}

Nonetheless, vanilla CDL is unable to handle a key characteristic of human physiological data, namely their \textit{local-global} structure. At the population-level, \textit{global}, shared, patterns may emerge due to similar anatomical traits or common neuronal circuits. At the individual's scale, \textit{local}, personal variations around these commonalities may exist due to various idiosyncrasies, differences in perceptual-motor style or pathology \citep{mantilla2020motor, vidal2021perceptual, webster2019principles}. 
Naive application of CDL to this type of data (hereafter referred to as PopCDL for ``population-level CDL'') will yield a globally shared \textit{population dictionary} that disregards individual-level structure, treating it as noise.
Conversely, applying CDL to each individual independently (hereafter referred to as IndCDL for ``individual-level CDL'') precludes the learning of the shared structures in the data, and can be unreliable under the presence of noise.
Despite efforts to address these challenges, the global-local structure has not been fully leveraged to achieve meaningful representations of human physiology (see \Cref{sec:related_works}).

To address these issues, we propose Personalized CDL (PerCDL), which aims to infer both global (population-level structures) and local (individual-level peculiarities) patterns from a collection of time series.
The reconstruction process of PerCDL is illustrated in \Cref{fig:illustration}.
Briefly, PerCDL learns individual-specific personalized atoms from a set of common atoms shared across the population.
Personalization is achieved through spatiotemporal transformation of the common atoms using Time Warping (TW;~\citealp{muller2007information}).
Specifically, we adapt the shape reparameterization method of \citet{celledoni2023deep} to the CDL context.
Unlike previous works that also investigated the use of TW in DL \citep{xu2023generalized, yazdi2018time}, our approach does not require the manual selection of a set of increasing functions \citep{xu2023generalized} and is computationally more efficient than that of \citet{yazdi2018time}, making it more robust, scalable, and usable.

While the proposed framework is general in nature, particular consideration will be given to the analysis of neurophysiological signals (\textit{e.g.}, heart rate, breathing, walking, etc).
Specifically, the analysis of gait.
To better model such signals, a common practice in CDL has been to impose non-overlapping atoms to reflect the underlying physiological origin of these events~\citep{germain2024persistence, zhu2016matrix, schafer2022motiflets, torkamani2017survey, jas2017learning, dupre2018multivariate, power2023using}.
The no-overlap assumption will be implicit in the remainder of this text. 
Besides being physiologically relevant, it is also computationally convenient enabling closed-form updates of the common dictionary (the \textit{exact} solution of the $l_0$ sparse coding problem;~\citealp{Charles24}) and parallelism at each step. 
Furthermore, this constraint allows us to conduct an analysis of the convergence rate of the associated Maximum Likelihood Estimator (MLE). 
For applications where the the no-overlap assumption is no longer realistic, any alternative optimisation method that relaxes this assumption~\citep{song2018spike, tolooshams2020convolutional} can be used instead.

\begin{itemize}[leftmargin=3ex,noitemsep,topsep=-8pt]
    \item We propose PerCDL\footnote{\url{https://github.com/axelroques/PerCDL}.}, a framework for learning interpretable representations of time-series data that highlight personal deviations from common structures (\Cref{sec:PerCDL_methodo}).
    \item We present a meta-algorithm for solving the PerCDL problem (\Cref{sec:meta-algorithm}). The algorithm is modular in design, offering the flexibility to combine various components from conventional CDL \citep{garcia2018convolutional, Charles24}. A federated learning, fully parallelizable variant specific to physiological time series representation is also introduced (\Cref{sec:federated}). Its complexity is $\mathcal{O}(N\log(N))$ per processor, where $N$ is the number of samples per time series.
    \item We provide statistical guarantees for PerCDL (\Cref{sec:theory}, \Cref{theorem:convergence1}): the estimated dictionary converges towards the true common dictionary at a rate of $\mathcal{O}(1/\sqrt{Sp})$, with $S$ the number of individuals and $p$ the number of pattern observations per individual.
    This improves over the naive scheme, IndCDL, which achieves a rate of $\mathcal{O}(1/\sqrt{p})$.
    \item We evaluate PerCDL on synthetic and real-world human physiology data, namely locomotion and ECG signals (\Cref{sec:experiments}). The results demonstrate that PerCDL reliably extracts interpretable global and local motifs while being robust to noise.  
\end{itemize}

\IncMargin{2em}
\SetKwComment{Comment}{// }{}
\SetKw{Input}{Inputs}
\SetKw{Output}{Outputs}

\begin{algorithm*}[ht]
\caption{
\textbf{Meta-algorithm to Solve the PerCDL Problem in \Cref{eq:PerCDL}}. 
$\hat{\bm{\Phi}}$ is the personalized dictionary obtained from the transformation of the atoms in the common dictionary $\bm{\Phi}$.\vspace{-6ex}
}
\label{alg:PerCDL}
\Input{$\bm{X} \in \R^{S \times N\times P}$, $f:(\mathbb{R}^P)^L\times \mathbb{R}^M \mapsto (\mathbb{R}^P)^L $} \\ 
\Output{$\bm{\Phi} \in \R^{K \times L\times P}$, $\bm{A} \in \R^{S \times K \times M}$, $\bm{Z} \in \R^{S \times K \times N-L+1}$
}

\setcounter{AlgoLine}{0}

$\bm{\Phi}, \bm{A}, \bm{Z} \gets \text{setInitialValues}()$ \Comment{Initialization}

\Comment{Obtain initial estimate for the common dictionary}
\While{$i \leq n_{\mathrm{init}}$}{
    $\bm{Z} \gets \text{CSC}\left( \bm{X}, \bm{Z}, \bm{\Phi} \right)$: for any $s\in[S]$, $\argmin{\bm{Z}_s \geq 0} \norm{
	    \bm{x}^s - \sum_{k=1}^K \bm{z}^s_k * \bm{\phi}_k
    }_2^2 
    + \lambda \norm{\bm{Z}}_0$\;
    $\bm{\Phi} \gets \text{CDU}\left( \bm{X}, \bm{Z}, \bm{\Phi} \right)$: $\argmin{\bm{\Phi}: \norm{\bm{\phi}}_2 = 1} 
    \sum_{s=1}^{S} \norm{ \bm{x}^s - \sum_{k=1}^{K} \bm{z}^s_k * \bm{\phi}_k }_2^2$\;
}

\BlankLine
\Comment{Personalization}
\While{$i \leq n_{\mathrm{perso}}$}{
    $\bm{A} \gets \text{IPU}\left( \bm{X}, \bm{Z}, \bm{\Phi},\bm{A}, f \right)$: for any $s\in[S]$, $\argmin{\bm{A}} 
     \norm{
    	\bm{x}^s - \sum_{k=1}^K \bm{z}^s_k * f\left( \bm{\phi}_{k}, \bm{a}_{k}^{s} \right)
    }_2^2$, \;
    \,$\bm{Z} \gets \text{CSC}\left( \bm{X}, \bm{Z}, \hat{\bm{\Phi}} \right)$\;
    $\bm{\Phi} \gets \text{PerCDU}\left( \bm{X}, \bm{Z}, \bm{\Phi},\bm{A},f \right):$ 
    $\argmin{\bm{\Phi}: \norm{\bm{\phi}_k}_2 = 1}\sum_{s=1}^S \norm{
        \bm{x}^s - \sum_{k=1}^K \bm{z}^s_k * f\left( \bm{\phi}_{k}, \bm{a}_{k}^{s} \right)
    }_2^2$\;
}
\end{algorithm*}


\section{PERSONALIZED CDL}
\label{sec:methodology}

\textbf{Notations.}
The integer range $\left\{ k, \ldots, l \right\} \subset \mathcal{P}\left( \mathbb{Z} \right)$ is written $\left[ k\, : \, l\right]$, and by extension $\left[ l \right] = \left[ 1\, : \, l \right]$, with $k, \, l \in \mathbb{N}$. 
For any vector $\bm{u} = \left( u_t \right)_{t \in [T]}$, we define its $l_2$-norm by $\norm{\bm{u}}_2 = \sqrt{\sum_{t=1}^T \| u_t \|^2}$ and its $l_0$-norm by $\norm{\bm{u}}_0 = \sum_{t=1}^T \mathrm{1}_{\neq 0} \left( u_t \right)$. 
We denote by $\mathcal{O}_P=\{O \in \mathbb{R}^{P\times P}: O^\top O = I_P \}$ and $C^1([0,1])$ the space of continuously differentiable functions $g:[0,1]\to [0,1]$.

\vspace{-1ex}
\subsection{Convolutional Dictionary Learning}
Consider a time-series dataset $\bm{X} \in \R^{S \times N \times P}$ of $S$ $P$-dimensional time series $\left( \bm{x}^s \right)_{s \in \left[ S \right]}$ of equal size $N$.
In the context of human physiological signals, \(S\) may correspond to the number of unique individuals, \(P\) may refer to the number of sensor features, and $N$ to the maximum number of samples per individual.
Given a shared dictionary $\bm{\Phi}=(\bm{\phi}_k)_{k \in [K]} \in \R^{K \times L\times P}$ of $K$ common patterns of length $L$ ($L < N$), their respective weights, and activations $\bm{Z}=(\bm{z}^s_k)_{s, k \in [S] \times [K]}$ in $\R^{S \times K \times (N-L+1)}$, the CDL problem is stated as:
{%
\setlength{\belowdisplayskip}{0.5ex} \setlength{\belowdisplayshortskip}{0.5ex}
\setlength{\abovedisplayskip}{0.5ex} \setlength{\abovedisplayshortskip}{0.5ex}
\begin{equation}
	\argmin{\bm{Z} \geq 0, \; \bm{\Phi}: \norm{\bm{\phi}_k}_2 = 1} 
	\sum_{s=1}^S \norm{\textstyle
        \bm{x}^{s} - \sum_{k=1}^K \bm{z}^s_k * \bm{\phi}_k
    }^2_2
 + \lambda \norm{\bm{Z}}_0,
 \label{eq:cdl}
\end{equation}
}%
where $*$ denotes the multivariate linear convolution such that, for two time series $\bm{z}=(z_i)_{i\in[N-L+1]}\in \mathbb{R}^{N-L+1}$ and $\bm{y}=(y_j)_{j\in[L]}\in (\mathbb{R}^{P})^L$,
{%
\setlength{\belowdisplayskip}{0.5ex} \setlength{\belowdisplayshortskip}{0.5ex}
\setlength{\abovedisplayskip}{0.5ex} \setlength{\abovedisplayshortskip}{0.5ex}
\[
    \bm{z} * \bm{y} = \left({\textstyle\sum_{l=1}^L} y_l \, z_{m-l+1}\right)_{m\in [N]} \in (\mathbb{R}^P)^N.
\]
}%
$\lambda$ is a penalty term that balances the sparsity of the activation matrix and the reconstruction error. 
Increasing $\lambda$ results in sparser activations and yields more interpretable atoms but often at the cost of an increased reconstruction error.

This problem formulation does not explicitly consider the local-global structure of physiological signals: one can either learn a dictionary for each individual by running CDL \(S\)-times (IndCDL) or, alternatively, learn a single dictionary that is shared across the whole population (PopCDL).

\vspace{-1ex}
\subsection{Personalized Convolutional Dictionary Learning}
\vspace{-1ex}
\label{sec:PerCDL_methodo}

To accommodate inter-individual variability within the CDL framework, we propose to learn population-level atoms along with individual-level atoms. 
Each personalized atom \(\hat{\bm{\phi}}^s_k\) is derived from a common atom \(\bm{\phi}_k\) in the global dictionary \(\bm{\Phi}\) via a transformation function.
More formally, denoting a parameterized transformation as $f:(\R^P)^L\times \Theta\to (\R^P)^L$, where $\Theta \subset \R^M$ is the set of personalization parameters, the personalized version of the $k^{th}$ global atom $\bm{\phi}_k$ for the $s^{th}$ time series is expressed as, for any $(s, k) \in [S] \times [K]$,
{%
\setlength{\belowdisplayskip}{0.5ex} \setlength{\belowdisplayshortskip}{0.5ex}
\setlength{\abovedisplayskip}{0.5ex} \setlength{\abovedisplayshortskip}{0.5ex}
\begin{equation}
\label{eq:personalized_dictionnary}
	\hat{\bm{\phi}}_{k}^{s} = f\left( \bm{\phi}_{k}, \bm{a}_{k}^{s} \right)
\end{equation}
}%
with $\bm{a}^s_k \in \Theta$, the personalization parameters. 
This representation strictly generalizes both IndCDL and PopCDL since they can be obtained by setting $f(\bm{\phi}_k,\bm{a}_k^s)=\bm{a}_k^s\in (\mathbb{R}^d)^L$ or $f(\phi_k,a_k^s)=\phi_k$ respectively.
The choice of a suitable transformation function $f$ is discussed in \Cref{sec:transformation_function}.

With $\bm{A}=(\bm{a}_{k}^{s})_{s, k \in [S] \times [K]}\in \R^{S\times K\times M}$, the matrix of all personalization parameters, PerCDL solves
{%
\setlength{\belowdisplayskip}{0ex} \setlength{\belowdisplayshortskip}{0ex}
\setlength{\abovedisplayskip}{0ex} \setlength{\abovedisplayshortskip}{0ex}
\begin{equation}
	\label{eq:PerCDL}
	\argmin{\bm{Z} \geq 0, \, \bm{\Phi}: \norm{\bm{\phi}_k}_2 = 1, \, \bm{A}} 
	\sum_{s=1}^S \norm{\textstyle
        x^s-\sum_{k=1}^K \bm{z}^s_k * \hat{\phi}_{k}^{s}
    }_2^2 
  + \lambda \norm{\bm{Z}}_0.
\end{equation}
}%
In brief, the common structures of a dataset $\bm{X}$ are identified in a dictionary $\bm{\Phi}$, while time series-specific variations of these global features are captured through the parameters matrix $\bm{A}$.
Information about the occurrence and magnitude of these personalized motifs are contained in the activation matrix $\bm{Z}$.
\Cref{fig:illustration} illustrates the principles of PerCDL.


\subsection{A Meta-Algorithm}
\label{sec:meta-algorithm}
This section introduces a general meta-algorithm for solving the PerCDL problem in \Cref{eq:PerCDL} (\Cref{alg:PerCDL}).
The algorithm comprises two steps: \ding{182} Obtain an initial estimate of the common dictionary \(\bm{\Phi}\); \ding{183} Refine \(\bm{\Phi}\) to yield the personalized dictionary \(\bm{\hat{\Phi}}\).

\vspace{-1ex}
\paragraph{\ding{182} Initial Estimates for \(\bm{\Phi}\) and \(\bm{Z}\) (Lines 2-5).} 
The classical CDL problem (\Cref{eq:cdl}) is first solved to obtain initial estimates for the common dictionary and the associated activations. 
Any off-the-shelf CDL algorithm can be used, such as a block-coordinate descent schemes, which alternate between a convolutional sparse coding step (CSC) and a convolutional dictionary update step (CDU).
Given a fixed dictionary, the CSC step finds the best activations $\bm{Z}$ subject to a sparsity-promoting penalization.
In this work, the \(l_0\)-penalty method proposed in \citet{Charles24} is adopted as it is both robust and exact for non-overlapping atoms. 
Other approximate $l_1$-penalized methods that relax this assumption could be used~\citep{kavukcuoglu2010learning, boyd2011distributed, moreau2018dicod}.
Given a fixed set of activations $\bm{Z}$, the CDU step finds the dictionary $\bm{\Phi}$ that minimizes the $l_2$-reconstruction error.
Once again, any dictionary update scheme can be used~\citep{garcia2018convolutional}. 

\vspace{-1ex}
\paragraph{\ding{183} Personalization (Lines 6-10).} 
Once sensible initial estimates for $\bm{\Phi}$ and $\bm{Z}$ have been found, the personalization parameters $\bm{A}$ are learned (Line 7), while the activations $\bm{Z}$ (Line 8) and the common dictionary $\bm{\Phi}$ (Line 9) are refined in a block-coordinate descent scheme.
In the individual parameters update (IPU) step, any constrained optimization algorithm can be used to obtain an approximate solution, such as projected gradient descent~\citep{bubeck2015convex}.
This work uses gradient descent with the Polyak stepsize~\citep{hazan2019revisiting}, where the infimum of the objective is substituted with its lower bound, $0$~\citep{loizou2021stochastic}.
This can be run in parallel on $S$ processors.
When updating the dictionary given \(\bm{A}\) and \(\bm{Z}\) (PerCDU step), the transformation \(f\) may introduce non-linearities that prohibit the use of conventional CDU update schemes, a problem shared with parametric DL methods~\citep{ataee2010parametric,turquais2018parabolic,chen2023parametric}.
In fortunate cases where \(f\) is linear in $\bm{\phi}$, conventional CDU updates can be exploited.

\vspace{-1ex}
\paragraph{Federated Learning Implementation.} 
Under the no-overlap assumption, the CDU step reduces to an average and can be computed efficiently as $\bm{\phi}_k=\sum_{j}^{p_s^k} \bm{y}^s_k\left[j\right] /p_s^k$, denoting by $\bm{y}^s_k\left[j\right]$ the $j^{\text{th}}$ subpart of $\bm{x}^s$ where a pattern is recognized (a non-zero activation in $\bm{z}_k^s$) and $p_s^k$ the number of non-zero activations in $\bm{z}_k^s$.
Furthermore, a closed formed solution is also available for the PerCDU step (\Cref{theorem:convergence}). 
These observations support the implementation of a federated learning version of this meta-algorithm, especially well-suited to the case of physiological data because of the aforementioned constraint (additional details can be found in \Cref{sec:federated}).

\vspace{-1ex}
\paragraph{Initialization.} 
Both the activations \(\bm{Z}\) and the personalization parameters $\bm{A}$ are initialized to zero.
This is common practice for \(\bm{Z}\) and empirical evidence suggest that this results in good performance for $A$.
In contrast, it is known that DL and CDL are sensitive to dictionary initialization~\citep{garcia2018convolutional,agarwal2014learning,ravishankar2020analysis}.
In the context of overcomplete DL, \citet{arora2013new} and \citet{agarwal2014learning} have provided upper bounds on the distance between the initial estimate and the true underlying dictionary to ensure complete recovery, as well as methods to achieve such theoretically good initializations.
Therefore, a dictionary initialization with a reasonable estimate of the expected patterns is recommended whenever possible.
In our experiments, the common atoms in $\Phi$ were initialized with a representative motif of the expected common structures (\textit{e.g.}, a complete gait cycle obtained from a healthy participant).

\vspace{-1ex}
\paragraph{Time Complexity.}
\label{sec:time_complexity}
Given that a basic step of CDL is an alternating minimization scheme (CDU+CSC), the complexity of $N_{\text{step}}$ of CDL is thus $\mathcal{C}[N_{\text{step}}](CDL)=N_{\text{step}}[\mathcal{C}(\text{CSC})+\mathcal{C}(\text{CDU})]$, with $\mathcal{C}(\text{CSC})=\mathcal{O}(SKLN(\log(N)+P))$ \citep{Charles24} and $\mathcal{C}(\text{CDU})=\mathcal{O}(SNKP)$ \citep{garcia2018convolutional}.
If the transformation $f$ is linear in $\bm{\phi}$, $\mathcal{C}(\text{PerCDU}) = \mathcal{C}(\text{CDU})$.
Therefore, the complexity of PerCDL can be written as $\mathcal{O}\left(\mathcal{C}[n_{\text{init}}+n_{\text{perso}}](CDL)+n_{\text{perso}}\mathcal{C}(\text{IPU})]\right)$. The extra cost of PerCDL compared to PopCDL is only the parameter update (IPU), which can be computed in parallel.
$\mathcal{C}(\text{IPU})$ is bounded by $\mathcal{O}(SKMNP T_{\text{grad}})$ with $T_{\text{grad}}$ the number of gradient steps.
However, \Cref{lemma:rotations} on orthogonal transformation (\Cref{sec:rotations_appendix}) proves that the IPU can be solved more efficiently if structural assumptions can be made on $f$.


\subsection{Transformation Function} 
\label{sec:transformation_function}
The choice of the transformation function $f$ is paramount to the PerCDL framework and should be regarded as an integral aspect of downstream problem modelling.
Since our framework is general, any transformation function appropriate for the task at hand can be used.  
This section will provide a family of transformation functions that are well suited for, but not exclusive to, human physiology data.

\vspace{-1ex}
\paragraph{Time Warping Transformation.}
Let us consider a continuous signal $s:[0,1]\to \mathbb{R}^P$.
For $t \in [0, 1]$, the ``time-warped'' version of $s$ is $s_{\mathrm{warp}}\left(t\right) = s\left( \psi_{\bm{a}}\left(t\right) \right)$, where $\psi_{\bm{a}}:[0,1]\to[0,1]$ is a time warping function parameterized by $\bm{a}\in \mathbb{R}^M$.
In our context, a time-warp $\psi_{\bm{a}} : [0,1] \to [0,1]$ is applied to each atom $\bm{\phi} = \left( \phi_{l} \right)_{l \in [L]} \in (\mathbb{R}^P)^L$ (interpreted as a discrete-time signal) as, for any $i\in [L]$,
\begin{equation}
    \label{eq:transformation_warping}
    f\left( \bm{\phi}, \bm{a} \right)
    =
    \mathcal{T}_{\psi_{\bm{a}},\sigma}\left(\bm{\phi}\right),
    \quad
    [\mathcal{T}_{\psi_{\bm{a}},\sigma}\left(\bm{\phi}\right) ]_{i}
    = 
    \tilde{\bm{\phi}}\left(\psi_{\bm{a}}\left(t_i\right)\right),
\end{equation}
where \(t_i = i/L\) is the \(i\)th sampling time,
\(\tilde{\bm{\phi}} : [0, 1] \mapsto {\left(\mathbb{R}^P\right)}^L\) is an interpolation of \(\bm{\phi}\), and $\mathcal{T}_{\psi_{\bm{a}},\sigma}:(\mathbb{R}^P)^L\to(\mathbb{R}^P)^L$ is the composition of the interpolation of \(\bm{\phi}\) and the time warping \(\psi_{\bm{a}}\).

A smooth linear interpolation is used to interpolate \(\bm{\phi}\). 
Let $c_l(t)=(t-t_l)$ for any $(t,l)\in[0,1]\times [L]$. Then,
\begin{equation*}
    \tilde{\bm{\phi}}\left(t\right)
    = 
    {\textstyle\sum_{l=1}^{L}} w_l \left(\sigma, t\right)
    \left[ \phi_l + c_l(t) \left(\phi_{l+1}-\phi_l\right) \right],
\end{equation*}
where
\(
    w_l \left( \sigma, t \right) =
    \exp\left[- \frac{c_{l}(t)^2}{ 2\sigma^2} \right]
    /
    \sum_{l'=1}^{L} \exp\left[- \frac{c_{l'}(t))^2} {2\sigma^2} \right] ,
\) 
and $\sigma$ is a regularization parameter such that $w_l(\sigma,t)\to \mathds{1}_{ \left(l-\frac{1}{2}, l+\frac{1}{2}\right)}  \left( L \, t \right)$ as $\sigma$ goes to zero. 
Conveniently, the resulting transformation $f$ is linear in $\bm{\phi}$.  
Furthermore, \(w_l\) allows for a smooth, differentiable mapping $(\bm{\phi},\bm{a}) \mapsto f(\bm{\phi},\bm{a})$ so that gradient-based optimization can be leveraged for the IPU update.

\begin{figure}
  \centering
    \centering
    \includegraphics[width=0.4\textwidth]{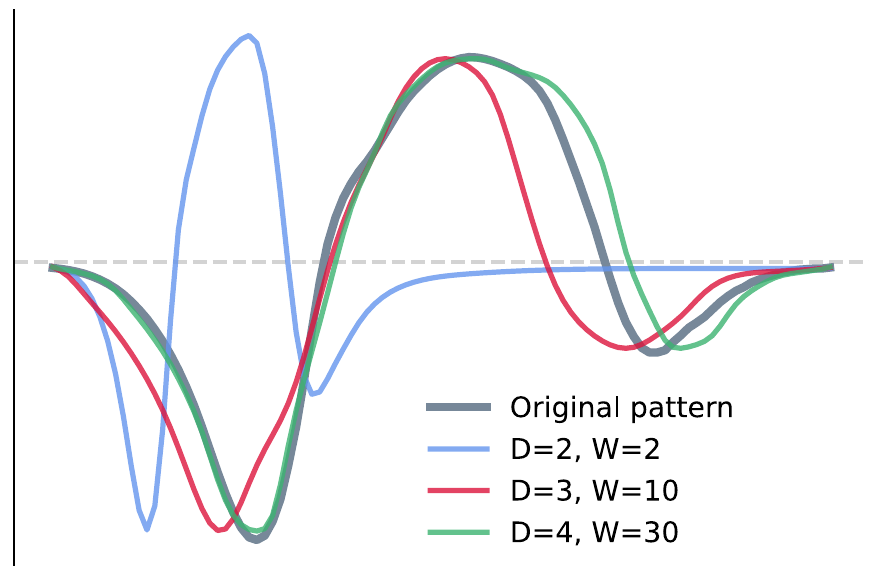}
    \caption{
        \textbf{Time Warping using Eqs. \eqref{eq:transformation_warping}-\eqref{eq:psi_def}}.
    }
    \label{fig:transfo_illustration}
\end{figure}

\vspace{-1ex}
\paragraph{Time Warping Function $\psi_{\bm{\alpha}}$.}
We opt for the time warping parameterization presented by \citet{celledoni2023deep}, which takes advantage of the Lie group structure of
\begin{align*}
    \mathcal{D}_\psi = \left\{g\in C^1([0,1]): g(0)=0,; g(1)=1,\; g'>0 \right\},
\end{align*}
the set of non-degenerate time warpings. (Degenerate time warpings allow $g'\geq 0$ such that $g$ has not necessary an inverse.) 
$\mathcal{D}_\psi$ is a group for the composition: the composition of two time warps is a valid time warp. 
Thus, an infinitesimal displacement around $g\in \mathcal{D}_\psi$ is not $g+\mathrm{d}g$, but $(\text{Id}+\mathrm{d}g)\circ g$, where $\mathrm{d}g$ is in $\mathsf{TM}_g=\{ g\in C^1([0,1]): g(0)=g(1)=0\}$, the tangent space at $g$~\citep{celledoni2023deep}. 
Given two hyper-parameters $D, W \in \mathbb{N}$ and $M=DW$, we define, for any $\bm{a}=(a^d_w)_{d, w\in[D] \times [W]} \in \Theta $, $t\in[0,1]$, and $d \in [D]$, $\psi_{\bm{a}}=\psi_{D,\bm{a}} \, \circ \, \ldots \, \circ \, \psi_{1,\bm{a}}$ as a composition of $D$ displacements:
\begin{equation}
	\label{eq:psi_def}
  \psi_{d,\bm{a}} = \text{Id} + \sum_{w=1}^{W} a^d_w \; b_w,\quad
  b_w(t) = \frac{\sin(w\pi t)}{w\pi}.
\end{equation}

In the following, we motivate the choice of $\Theta$ such that $\psi_{\bm{a}}$ is in the set of time warping $\mathcal{D}_\psi$, \textit{i.e.}, $\psi_{\bm{a}}[0] = 0$, $\psi_{\bm{a}}[1] = 1$ and $\psi_{\bm{a}}' > 0 $.
Note that $(b_w)_{w\geq 1}$ is a Fourier basis of $\mathsf{TM}_g$, thus by construction $\psi_{\bm{a}}[0] = 0$, $\psi_{\bm{a}}[1] = 1$. 
Moreover, $b_w$ is normalized such that $|b_w'|\leq 1$. 
As such, if for any $d\in[D]$, $\sum_{w=1}^{W} |a^d_w|< 1 $, then we have $\psi_{\bm{a},d}' \geq 0$ for any $d\in[D]$ and therefore by composition of time warping $\psi_{\bm{a}}\in \mathcal{D}_\psi$.
Hence, the parameter set $\Theta$ is chosen as
\begin{align*}
    \left\{\left(a^d_w\right)_{d, w \in [D] \times [W]}\in \R^{M} \;:\; {\textstyle\sum_{w=1}^{W}} |a^d_w| \leq 1,\;\; d \in [D] \right\} .
\end{align*}
The depth $D$ and width $W$ of the transformation both shape the extent of the variability around the common atoms. Tuning these hyper-parameters offers a trade-off between flexibility and interpretability of the transformation.
$D$ controls the Lipschitz constant of $\psi$, \textit{i.e.}, \textit{global dilation}, and $W$ controls the high frequency truncation, \textit{i.e.}, \textit{local dilation}.

\Cref{fig:transfo_illustration} shows the effect of $f(\cdot,\bm{a})$ defined with \eqref{eq:transformation_warping}-\eqref{eq:psi_def} for different choices of hyper-parameters $D,W$ and uniformly sampled $\bm{a}$ from $(-1,1)^M$ projected onto $\Theta$. 
Higher values of $D/W$ allow for greater flexibility, while lower values increase the granularity of the transformation. 
As illustrated in \Cref{fig:illustration}, this class of transformation is general enough to recover pathological gait cycles from a healthy one. 
Additional experiments on the sensitivity of the transformations with respect to $D$ and $W$ are presented in \Cref{sec:sensitivity_analysis} of the appendix.


\subsection{Theoretical Guarantees: Toward Mixed-Effects Models}
\label{sec:theory}
PerCDL and mixed-effects models are naturally related since they both combine population-level and individual-level parameters.
In mixed-effects models, high-level structures in the population characteristics have been shown to boost the convergence rate of the MLE under mild assumptions, to a rate of $O(1/\sqrt{Sp})$ for population parameters \cite[Theorem 3]{nie2007convergence}, or $O(1/\sqrt{p})$ for individual parameters, with $S$ the number of individuals and $p$ the number of observations per individual.
This section will prove a similar convergence rate of $O(1/\sqrt{Sp})$ for the MLE related to PerCDL's common atoms, under reasonable assumptions.

\paragraph{Theoretical Setup.}
For the sake of clarity, our interest will be limited to a single common pattern in $\bm{\Phi}=\{\bm{\phi}_*\}$ (\textit{i.e.}, $K=1$), where $\bm{\phi}_* \in (\R^P)^L$ is the true common atom.
This implies that, for any $s\in[S]$, denoting the \(s^{\text{th}}\) parameter as $a_*^s\in \mathbb{R}^M$, the observation is generated as
{%
\setlength{\belowdisplayskip}{1.0ex} \setlength{\belowdisplayshortskip}{1.0ex}
\setlength{\abovedisplayskip}{1.0ex} \setlength{\abovedisplayshortskip}{1.0ex}
\begin{equation}
	\bm{x}^s= \bm{z}^s * f(\bm{\phi}_*,\bm{a}_*^s) + \bm{\epsilon}^s,\quad  \bm{\epsilon}^s \overset{\text{i.i.d.}}{\sim} \mathcal{N}(0_N,\sigma^2 I_N).
\end{equation}
}%

Let us denote by $\bm{y}^s\left[j\right]$ the $j^{\text{th}}$ subpart of $\bm{x}^s$ where a pattern is recognized by a non-zero activation in $\bm{z}^s$.
Now, provided that there is no overlap between atoms---a realistic assumption in physiological signals---and that $\bm{z^s}$ is known---another weak assumption since $l_0$ penalization is used and its sparse-coding solver is exact under non-overlapping condition \citep{Charles24}---for any $s, j \in [S] \times [p_s]$, $\bm{y}^s\left[j\right]$ can be decomposed as 
\begin{equation}
\label{eq:mixed_effect}
    \hspace{-.5em}
	\bm{y}^s\left[j\right] \!=\! f(\bm{\phi}_*,\bm{a}_*^s) \!+\! \bm{\epsilon}^s \left[j\right], 
 \;\; \bm{\epsilon}^s\left[j\right] \overset{\text{\scriptsize{i.i.d.}}}{\sim}\mathcal{N}(0_L,\sigma^2 I_L),
\end{equation}
where $p_s$ is the total number of observations and $\bm{a}^s_* \overset{\text{i.i.d.}}{\sim} \mathcal{P}$ the distribution of individual parameters. 

Even under these seemingly strong assumptions, the analysis will prove to be non-trivial because the assumptions on the distribution of individual parameters $\mathcal{P}$ are weak and the transformation function may be highly non-linear (TW).
This work provides the first theoretical analysis of DL with TW (\textit{cf.}, \citealp{yazdi2018time, xu2023generalized}). 
Identifying the minimal set of necessary assumptions on $\mathcal{P}$ is particularly important to gain insight into the group structure necessary for the recovery of common atoms.

\paragraph{Case 1: Global structure is absent.}
If there is no shared, global structure in the population (as assumed by IndCDL, \textit{i.e.}, $f(\bm{\phi}_*,\bm{a}_*^s)= \bm{a}_*^s \in (\mathbb{R}^P)^L$), then the MLE estimator is the local empirical mean $\frac{1}{p_s} \sum_{j=1}^{p_s} \bm{y}^s\left[j\right]$ and converges at a rate of $O\left( 1/\sqrt{p_s} \right)$.

\paragraph{Case 2: Global structure is present.}
In this case, we consider the following hypothesis to derive a convergence rate for the MLE on the common atom $\bm{\phi}_*$.
\begin{assumption}
	\label{assumption:structure}
    There exists a common atom $\bm{\phi}_* \in (\R^P)^L$ with individual parameters $\bm{a}^s_*\in \R^{M}$, and an operator $\mathcal{L}: \R^M \to \mathcal{M}_{L,L}(\R)$, such that for any $s\in[S]$, $\bm{\phi}^s_* = \mathcal{L} \left( \bm{a}^s_* \right) \,\bm{\phi}_*$.
    Moreover, $\mathcal{P}$ is a distribution on $\R^M$ such that $\mathbb{E}_{\mathcal{P}} ( | \mathcal{L}\left( \bm{a}^s_*\right)^\top \, \mathcal{L} \left( \bm{a}^s_*\right) | ) < \infty$ and $\mathbb{E}_{\mathcal{P}} ( \mathcal{L} \left( \bm{a}^s_* \right)^\top \mathcal{L} \left( \bm{a}^s_* \right) )$ is symmetric positive definite. 
    Its minimum eigenvalue is denoted by $\rho>0$. 
\end{assumption}
Under \Cref{assumption:structure}, the model is a linear mixed-effects model if and only if $\mathcal{P}$ is Gaussian \citep{nie2007convergence}. In \Cref{assumption:structure}, $\mathcal{P}$ needn't be Gaussian, \textit{e.g.}, it can be a multimodal distribution. 
Therefore, \Cref{assumption:structure} can be seen as a refinement of Condition 2 by \citet{nie2007convergence}.

\begin{lemma}
 \label{lemma:assumptions}
	The assumption \Cref{assumption:structure} is met in the two following cases:
	\vspace{-3ex}
	\begin{itemize}
	    \item Time warping transformations: For any $\bm{a} \in \Theta \subset \R^M$, $\mathcal{L}(\bm{a})=\mathcal{T}_{\psi_{\bm{a}},\sigma}$ where $\mathcal{T}_{\cdot,\sigma}$ and $\psi$ are defined in \eqref{eq:transformation_warping} and \eqref{eq:psi_def} with $\sigma$ small enough. $\mathcal{P}$ is a distribution supported on $\Theta$ such that for any $l\in[L]$, the $l$-th sample of the common atom $\bm{\phi}$ has a positive probability to be seen in a personalized atom $\bm{\phi}_*^s$. 
	    \item Rotations: For any $\bm{a} \in \Theta \subset \R^M$, $\mathcal{L}(\bm{a})\in \mathcal{O}_{P}(\R)$.
	\end{itemize}
	\vspace{-3ex}
\end{lemma}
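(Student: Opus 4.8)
The plan is to verify \Cref{assumption:structure} in each of the two cases, the rotation case being essentially immediate. If $\mathcal{L}(\bm{a})$ is orthogonal for every $\bm{a}\in\Theta$, then $\mathcal{L}(\bm{a})^\top\mathcal{L}(\bm{a}) = I$ for every $\bm{a}$; hence $\mathbb{E}_{\mathcal{P}}(|\mathcal{L}(\bm{a}^s_*)^\top\mathcal{L}(\bm{a}^s_*)|)<\infty$ trivially, and $\mathbb{E}_{\mathcal{P}}(\mathcal{L}(\bm{a}^s_*)^\top\mathcal{L}(\bm{a}^s_*)) = I$ is symmetric positive definite with $\rho = 1$, whatever $\mathcal{P}$ is. The only thing to observe is that for this class $f(\bm{\phi},\bm{a}) = \mathcal{L}(\bm{a})\,\bm{\phi}$ with $\mathcal{L}(\bm{a})\in\mathcal{O}_P$, which is the definition of the transformation.

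For the time-warping class, I would first record that $f(\cdot,\bm{a}) = \mathcal{T}_{\psi_{\bm{a}},\sigma}(\cdot)$ is linear in $\bm{\phi}$ (the interpolation $\tilde{\bm{\phi}}$ is linear in $\bm{\phi}$ and resampling at the fixed times $\psi_{\bm{a}}(t_i)$ is linear), so it is represented by a matrix $\mathcal{L}(\bm{a}) := \mathcal{T}_{\psi_{\bm{a}},\sigma}$ whose entries are explicit combinations of the weights $w_l(\sigma,\psi_{\bm{a}}(t_i))$ and the coefficients $c_l(\psi_{\bm{a}}(t_i))$. Since $w_l(\sigma,\cdot)\in[0,1]$ with $\sum_l w_l(\sigma,\cdot) = 1$ and the $c_l$ are bounded on $[0,1]$, the entries of $\mathcal{L}(\bm{a})$ are bounded uniformly over $\bm{a}\in\Theta$; hence $\mathbb{E}_{\mathcal{P}}(|\mathcal{L}(\bm{a}^s_*)^\top\mathcal{L}(\bm{a}^s_*)|)<\infty$, and $\mathbb{E}_{\mathcal{P}}(\mathcal{L}(\bm{a}^s_*)^\top\mathcal{L}(\bm{a}^s_*))$ is automatically symmetric positive semidefinite.

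It remains to establish strict positive definiteness, which is the heart of the statement. I would argue by contradiction: suppose some $v\in\R^L\setminus\{0\}$ satisfies $v^\top\mathbb{E}_{\mathcal{P}}(\mathcal{L}(\bm{a}^s_*)^\top\mathcal{L}(\bm{a}^s_*))\,v = \mathbb{E}_{\mathcal{P}}(\norm{\mathcal{L}(\bm{a}^s_*)\,v}^2) = 0$, so $\mathcal{L}(\bm{a})\,v = 0$ for $\mathcal{P}$-almost every $\bm{a}$. For $\sigma$ small enough (this is precisely where that hypothesis is used), $w_l(\sigma,t)$ concentrates on the bin $\{|Lt-l|<1/2\}$, so $\mathcal{L}(\bm{a})$ equals, up to an error vanishing as $\sigma\to 0$, the nearest-grid resampling matrix: its $i$-th row is essentially $e_{l(i)}^\top$, where $l(i)$ is the grid index closest to $\psi_{\bm{a}}(t_i)$, and consequently column $l$ of $\mathcal{L}(\bm{a})$ is non-negligible exactly when the $l$-th sample of $\bm{\phi}$ is ``seen'' in $f(\bm{\phi},\bm{a})$, i.e.\ when $l = l(i)$ for some $i$. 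Picking $l$ with $v_l\neq 0$, the hypothesis on $\mathcal{P}$ gives that the event ``sample $l$ is seen'' has probability $q_l>0$, and on that event some row $i$ of $\mathcal{L}(\bm{a})$ yields $|[\mathcal{L}(\bm{a})\,v]_i|\geq |v_l| - o(1) > 0$ for $\sigma$ small, contradicting $\mathcal{L}(\bm{a})\,v = 0$ almost surely. Running the same estimate column by column and averaging even gives the quantitative bound $\mathbb{E}_{\mathcal{P}}(\mathcal{L}(\bm{a}^s_*)^\top\mathcal{L}(\bm{a}^s_*)) \succeq (\min_l q_l / L)\, I_L$, so $\rho \geq \min_l q_l / L$.

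I expect the main obstacle to be making this last step fully rigorous: one must control the discrepancy between $\mathcal{L}(\bm{a})$ and its $\sigma\to0$ limit uniformly in $\bm{a}\in\Theta$, and handle the fact that linear (rather than nearest-neighbour) interpolation spreads each row of $\mathcal{L}(\bm{a})$ over two adjacent grid indices, so the bound $|[\mathcal{L}(\bm{a})\,v]_i|\geq |v_l|-o(1)$ needs the spread to be negligible (an $O(1/L)$ effect) or needs to be combined across neighbouring rows. A convenient way to organize this is to prove the claim first for the idealized limiting resampling operator, where column $l$ is literally $e_l$ whenever sample $l$ is seen, and then transfer positive definiteness to small $\sigma>0$ using continuity of $\sigma\mapsto\mathbb{E}_{\mathcal{P}}(\mathcal{L}_\sigma^\top\mathcal{L}_\sigma)$ — justified by the uniform entrywise bound above and dominated convergence — since strict positive definiteness of a fixed symmetric matrix is an open condition.
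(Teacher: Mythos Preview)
Your treatment of rotations and your overall meta-strategy for time warping (prove positive definiteness for the $\sigma\to 0$ limit, then transfer to small $\sigma>0$ by dominated convergence and openness of positive definiteness) are correct and coincide with the paper's. The boundedness and positive semidefiniteness remarks are also fine.

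The gap is precisely where you flag it: the positive-definiteness argument for the limiting operator. Your contradiction step asserts $|[\mathcal{L}(\bm{a})v]_i|\geq |v_l|-o(1)$ on the event ``sample $l$ is seen'', treating the limiting row as $e_{l}^\top$. But the $\sigma\to 0$ limit of $\mathcal{T}_{\psi_{\bm{a}},\sigma}$ is \emph{not} nearest-neighbour resampling: row $i$ becomes $(1-\alpha)e_l^\top+\alpha e_{l+1}^\top$ with $\alpha=L\psi_{\bm{a}}(t_i)-l\in(-\tfrac12,\tfrac12)$, so $[\mathcal{L}(\bm{a})v]_i=(1-\alpha)v_l+\alpha v_{l+1}$ can vanish even when $v_l\neq 0$. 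The interpolation factor $\alpha$ is $O(1)$, not $O(1/L)$, so that escape route is closed; ``combining across neighbouring rows'' is a possible repair (one can chase a chain $|v_{l+1}|>|v_l|$ from $|1-\alpha|>|\alpha|$), but you have not made it precise, and the continuity-in-$\sigma$ transfer does not help since the nearest-neighbour operator is not the actual $\sigma\to0$ limit.

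The paper resolves this differently. Working directly with the linear-interpolation limit, it applies Jensen's inequality to each conditional expectation and recognises the resulting lower bound as $x^\top \bar{D}^\top \mathbf{S}\,\bar{D}\,x$, where $\mathbf{S}=\mathrm{diag}(S_l)$ with $S_l>0$ by hypothesis and $\bar{D}$ is upper bidiagonal with diagonal entries $1-\bar\alpha_l>\tfrac12>0$ (since $\bar\alpha_l\in(-\tfrac12,\tfrac12)$). A bidiagonal matrix with nonzero diagonal is invertible, so the quadratic form is strictly positive, which is exactly the structural fact that handles the $v_l,v_{l+1}$ coupling you identified. Replacing your heuristic contradiction step with this bidiagonal argument would close the gap.
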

More formal statements and proofs are postponed to \Cref{sec:fulltheory}.

The following additional hypothesis is needed to invoke a weighted law of large numbers in \Cref{theorem:convergence1} as it prevents an imbalance between the individuals' observations.
 \begin{assumption}
	\label{assumption:nb_observations}
	There exists a constant $C<+\infty$ such that for any $s\in[S]$, $1\leq p_s\leq C$. 
 \end{assumption}

Taking advantage of the fact that $\bm{\beta} \in(\R^P)^L \to f(\bm{a}, \bm{\beta})$ is linear, we derive the following theorem.
\begin{theorem}
\label{theorem:convergence1}
	Under \Cref{assumption:structure}, \Cref{assumption:nb_observations}, and assuming that the true parameters $(\bm{a}_s^*)_{s\in[S]}$ are given, for $S$ large enough, $\Sigma_{S,p}= \sum_{s=1}^S p_s \, \mathcal{L}\left( \bm{a}^s_* \right)^\top \, \mathcal{L}\left( \bm{a}^s_* \right)$ is invertible and the MLE estimator $\hat{\bm{\phi}}_{\mathrm{mle}} \sim \mathcal{N} \left( \bm{\phi}_*, (\Sigma_{S,p})^{-1} \right) $ is Gaussian. Moreover, $\hat{\bm{\phi}}_{\mathrm{mle}}$ converges toward $\bm{\phi}_*$ at a rate of $\mathcal{O} \big(1/\rho\sqrt{\sum_{s=1}^S p_s} \big)$ in probability. 
\end{theorem}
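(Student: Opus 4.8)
The plan is to exploit that, by \Cref{assumption:structure}, $f(\bm{\phi},\bm{a}_*^s)=\mathcal{L}(\bm{a}_*^s)\bm{\phi}$ is \emph{linear} in $\bm{\phi}$, so the negative log-likelihood attached to \eqref{eq:mixed_effect} is, up to an additive constant, the quadratic form
\[
  \bm{\phi}\;\longmapsto\;\frac{1}{2\sigma^2}\sum_{s=1}^S\sum_{j=1}^{p_s}\bigl\|\bm{y}^s[j]-\mathcal{L}(\bm{a}_*^s)\bm{\phi}\bigr\|_2^2 .
\]
First I would write down its stationarity (normal) equations, which read $\Sigma_{S,p}\,\bm{\phi}=\sum_{s,j}\mathcal{L}(\bm{a}_*^s)^\top\bm{y}^s[j]$ with $\Sigma_{S,p}=\sum_s p_s\,\mathcal{L}(\bm{a}_*^s)^\top\mathcal{L}(\bm{a}_*^s)$; on the event that $\Sigma_{S,p}$ is invertible this convex program has the unique solution $\hat{\bm{\phi}}_{\mathrm{mle}}=\Sigma_{S,p}^{-1}\sum_{s,j}\mathcal{L}(\bm{a}_*^s)^\top\bm{y}^s[j]$ (for $P>1$ the identity holds column-wise, so I treat $P=1$ for clarity). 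Substituting the data-generating equation $\bm{y}^s[j]=\mathcal{L}(\bm{a}_*^s)\bm{\phi}_*+\bm{\epsilon}^s[j]$ and $\Sigma_{S,p}^{-1}\Sigma_{S,p}=I$ yields the decomposition that drives the whole proof,
\[
  \hat{\bm{\phi}}_{\mathrm{mle}}-\bm{\phi}_*\;=\;\Sigma_{S,p}^{-1}\,\bm{v},\qquad \bm{v}:=\sum_{s=1}^S\sum_{j=1}^{p_s}\mathcal{L}(\bm{a}_*^s)^\top\bm{\epsilon}^s[j].
\]

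Conditionally on the given parameters $(\bm{a}_*^s)_s$, everything above except the noise is deterministic, so $\bm{v}$ is a fixed linear image of i.i.d.\ $\mathcal{N}(0,\sigma^2 I_L)$ vectors: it is centered Gaussian with covariance $\sum_{s,j}\mathcal{L}(\bm{a}_*^s)^\top(\sigma^2 I_L)\mathcal{L}(\bm{a}_*^s)=\sigma^2\Sigma_{S,p}$, hence $\hat{\bm{\phi}}_{\mathrm{mle}}-\bm{\phi}_*\sim\mathcal{N}(0,\sigma^2\Sigma_{S,p}^{-1})$, which is the announced Gaussianity (the $\sigma^2$ being absorbed into the stated covariance). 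This step is essentially free once linearity in $\bm{\phi}$ is in hand.

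It then remains to prove invertibility of $\Sigma_{S,p}$ for large $S$ and to extract the rate. I would normalize $\bar{M}_S:=(\sum_s p_s)^{-1}\Sigma_{S,p}=\sum_s w_s\,\mathcal{L}(\bm{a}_*^s)^\top\mathcal{L}(\bm{a}_*^s)$ with weights $w_s=p_s/\sum_{s'}p_{s'}$, which by \Cref{assumption:nb_observations} satisfy $\max_s w_s\le C/S\to 0$. Since the summands are i.i.d.\ with finite mean $\Xi:=\mathbb{E}_{\mathcal{P}}[\mathcal{L}(\bm{a})^\top\mathcal{L}(\bm{a})]$ (finiteness is precisely the integrability in \Cref{assumption:structure}), a \emph{weighted} law of large numbers gives $\bar{M}_S\to\Xi$ in probability, and continuity of the smallest eigenvalue yields $\lambda_{\min}(\bar{M}_S)\to\lambda_{\min}(\Xi)=\rho>0$; hence on an event $\mathcal{A}_S$ of probability tending to one, $\Sigma_{S,p}$ is invertible with $\lambda_{\min}(\Sigma_{S,p})\ge\frac{\rho}{2}\sum_s p_s$ (this is the sense in which it is invertible "for $S$ large enough"). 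For the rate, bound $\|\hat{\bm{\phi}}_{\mathrm{mle}}-\bm{\phi}_*\|\le\|\Sigma_{S,p}^{-1}\|_{\mathrm{op}}\|\bm{v}\|=\lambda_{\min}(\Sigma_{S,p})^{-1}\|\bm{v}\|$, while $\mathbb{E}\|\bm{v}\|^2=\sigma^2\,\mathbb{E}[\operatorname{tr}\Sigma_{S,p}]=\sigma^2\bigl(\sum_s p_s\bigr)\mathbb{E}_{\mathcal{P}}[\operatorname{tr}(\mathcal{L}(\bm{a})^\top\mathcal{L}(\bm{a}))]<\infty$ again by \Cref{assumption:structure}, so $\|\bm{v}\|=O_P\bigl(\sqrt{\sum_s p_s}\bigr)$ by Markov. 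On $\mathcal{A}_S$ these combine to $\|\hat{\bm{\phi}}_{\mathrm{mle}}-\bm{\phi}_*\|=O_P\bigl(\sqrt{\sum_s p_s}\,/(\rho\sum_s p_s)\bigr)=O_P\bigl(1/(\rho\sqrt{\sum_s p_s})\bigr)$, as claimed.

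The main obstacle is the weighted law of large numbers under only a \emph{first}-moment assumption on $\mathcal{L}(\bm{a})^\top\mathcal{L}(\bm{a})$: with a second moment it is one Chebyshev line, $\mathbb{E}\|\bar{M}_S-\Xi\|^2=\sum_s w_s^2\,\mathbb{E}\|\cdot\|^2\le(\max_s w_s)\,\mathrm{Var}\to 0$, but here I would truncate, writing $Y_s=\mathcal{L}(\bm{a}_*^s)^\top\mathcal{L}(\bm{a}_*^s)=Y_s\mathbf{1}_{\{\|Y_s\|\le t_S\}}+Y_s\mathbf{1}_{\{\|Y_s\|>t_S\}}$ with $t_S\to\infty$ and $t_S=o(\sqrt{S})$ (e.g.\ $t_S=S^{1/4}$): the truncated part has second moment $\lesssim t_S^2$ so its weighted fluctuations vanish by the bound above, and its mean differs from $\Xi$ by $\mathbb{E}\bigl[\|Y_1\|\mathbf{1}_{\{\|Y_1\|>t_S\}}\bigr]\to 0$, while the tail part is controlled by the same quantity together with $\sum_s w_s=1$. (For the concrete time-warping and rotation instances of \Cref{lemma:assumptions} one has uniformly bounded $\mathcal{L}$, so $\|Y_s\|$ is bounded and this subtlety disappears, but I would keep the argument at the abstract level of \Cref{assumption:structure}.) A secondary point of care is bookkeeping of the randomness: the Gaussianity claim is conditional on $(\bm{a}_*^s)_s$, whereas the LLN and the rate are unconditional over $\bm{a}_*^s\sim\mathcal{P}$, and \Cref{assumption:nb_observations} enters exactly to force $\max_s w_s\to 0$, the hypothesis that makes the weighted LLN — and hence the $\rho\sqrt{\sum_s p_s}$ normalization — go through.
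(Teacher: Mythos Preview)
Your proposal is correct and follows essentially the same route as the paper: write the normal equations for the quadratic log-likelihood, solve to get $\hat{\bm{\phi}}_{\mathrm{mle}}=\Sigma_{S,p}^{-1}\sum_{s,j}\mathcal{L}(\bm{a}_*^s)^\top\bm{y}^s[j]$, substitute the model to obtain $\hat{\bm{\phi}}_{\mathrm{mle}}-\bm{\phi}_*=\Sigma_{S,p}^{-1}\bm{v}$ with $\bm{v}$ Gaussian of covariance $\sigma^2\Sigma_{S,p}$, and then control $\lambda_{\min}(\Sigma_{S,p})$ via a weighted law of large numbers. The only substantive difference is that the paper invokes the weighted \emph{strong} LLN of Jamison et al.\ (1965) (verifying its counting-function condition from \Cref{assumption:nb_observations}) to get almost-sure convergence of $(\sum_s p_s)^{-1}\Sigma_{S,p}$, hence an almost-sure $\tilde{S}$ beyond which $\Sigma_{S,p}$ is invertible; your self-contained truncation argument yields only convergence in probability, which is enough for the stated $O_P$ rate but gives invertibility only on an event of probability tending to one rather than eventually almost surely.
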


When $p_s=p$ for any $s\in[S]$, we obtain the same convergence rate as for population parameters in mixed-effect models $\mathcal{O} \left(1/\sqrt{Sp} \right)$ \cite[Theorem 3]{nie2007convergence}. 
Furthermore, the assumption that the true parameters $(\bm{a}_s^*)_{s\in[S]}$ are known can be relaxed by assuming that a $\sqrt{p}$-consistent estimator $(\bm{\hat{a}}_s^*)_{s\in[S]}$ of $(\bm{a}_s^*)_{s\in[S]}$ is available.
However, more technical conditions \cite[Condition 1]{nie2007convergence} would have to be involved and complicate the exposition.
Nevertheless, \Cref{theorem:convergence1} asserts that, under suitable conditions, PerCDL offers identical statistical guarantees as mixed-effects models for estimating the common parameters.


\section{EXPERIMENTS}
\label{sec:experiments}
The meta-algorithm presented in \Cref{sec:meta-algorithm} was applied to synthetic (\Cref{sec:synthetic_data}) and real-world datasets (locomotion data in \Cref{sec:experiments:gait_analysis} and ECG data in \Cref{sec:app:ECG_analysis}). 
When applicable, PerCDL was compared with IndCDL and PopCDL.
Implementation details and additional numerical experiments can be found in \Cref{sec:numerical_details}. 


\subsection{Synthetic Data}
\label{sec:synthetic_data}
We generate a synthetic dataset with $S$ time series of equal length $N$. Each time series contains $r=3$ repetitions of $K=2$ common patterns of length $L$ (\Cref{fig:app:validation_common}).
Both patterns are personalized independently for each signal through TW (\Cref{eq:transformation_warping}), with parameters drawn uniformly random from $(-1, 1)^M$ and projected onto $\Theta$.
The signal-specific patterns are then placed at randomly chosen positions within each time series, with no overlap.
In all experiments, activations and personalization parameters are initialized to zero, while the dictionary is initialized with the personalized atom of the first signal.

\paragraph{Identifiability.} 
We investigate the ability of IndCDL, PopCDL, and PerCDL to extract the common structures with an increase in the number of signals.
All methods are applied to multiple synthetic datasets that contains between \(S = 1\) and \(S = 1024\) time series of equal length \(N = 500\).
For IndCDL, the common structures are computed as the Euclidean barycenter of the individual atoms found in each signal.
While we also considered a barycenter computed with the Soft-DTW metric, this led to worse results.
Thus, we only consider the Euclidean barycenter to ensure a fair comparison against PopCDL and PerCDL.
The distance between the obtained common structures and the underlying ground truths are shown in \Cref{fig:conv}. 
(The shape of the common atoms estimated by the three methods is shown in \Cref{fig:supp:synth_conv} in the Appendix.)

\begin{figure}[!t]
\centering
    \includegraphics[width=.9\linewidth]{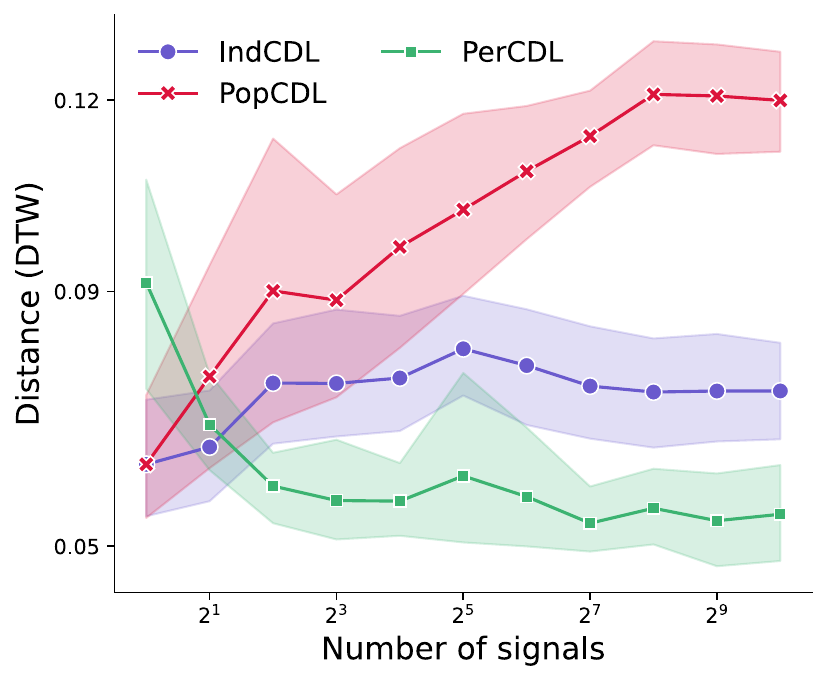}
\caption{
    \textbf{Convergence Toward the Common Structures.}
    Average distance between the common atoms identified in the synthetic experiment and the ground truth as a function of the dataset's size. 
    Shaded regions represent the $95 \%$ confidence interval.
}
\label{fig:conv}
\end{figure}

As the number of signals in the dataset increases, the quality of the common atoms identified by IndCDL and PopCDL degrades.
This suggests that the idiosyncrasies of each signal negatively affects estimation of the common structures.
Notably, IndCDL appears to perform better than PopCDL as the number of signals increases. 
We suspect that the common structures identified by PopCDL, by construction, do not match the local variations of the individual signals induced by the time warps, which could affect the segmentation performance of the CSC step. 
This segmentation error may in turn lead to CDU updates that drift further and further away from the true underlying common structures when the amount of individual atom variability increases. 
With IndCDL, segmentation is necessarily accurate since it uses the personalized structures of each signal directly.
In contrast with IndCDL and PopCDL, PerCDL ensures a good identifiability of the common shapes in the data that improves as the size of the dataset increases.

\begin{figure}[!t]
\centering
    \includegraphics[width=.9\linewidth]{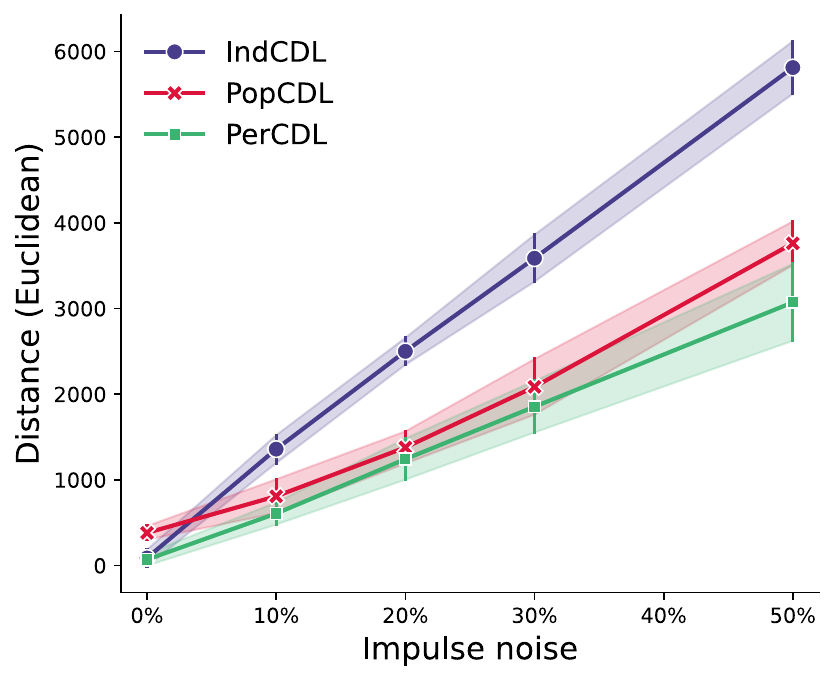}
\caption{
    \textbf{Reconstruction Error Under Impulse Noise Contamination}. Error bars represent two standard deviations.
}
\label{fig:noise}
\end{figure}

\paragraph{Robustness to Noise.} 
We now evaluate the reconstruction accuracy of PerCDL under impulse noise.
(Additional experiments with Gaussian noise can be found in  \Cref{sec:robustness_noise}). 
To that end, we generate a synthetic dataset of $S=32$ time series of equal length $N=1000$. 
In each time series, $r=3$ patterns of length $L=50$ are placed at random positions, with no overlap.
A quarter of the signals are then corrupted with impulse noise: $p$ percent of all samples---chosen at random---are polluted with either a positive or negative spike of uniformly random amplitude in the range $[2, 2.5)$.
The reconstruction errors as a function of the proportion of corrupted samples are presented in \Cref{fig:noise}.

While the performance of all methods degrades with an increase in noise, PerCDL proves more robust than IndCDL and PopCDL. 
Our results indicate that population-level structure information combined with the superior representation possibilities offered by the personalization step better withstand noise and artefacts in the input data.


\begin{figure*}[!hbt]
\centering
  \begin{tabular}[b]{c}
    \includegraphics[width=.3\linewidth]{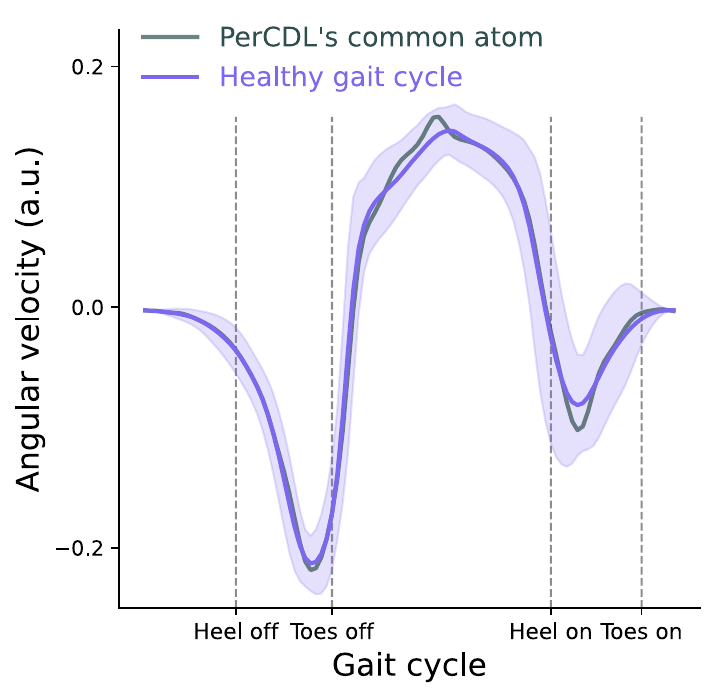} \\
    \small (a) PerCDL's common atom.
  \end{tabular} 
  \begin{tabular}[b]{c}
    \includegraphics[width=.56\linewidth]{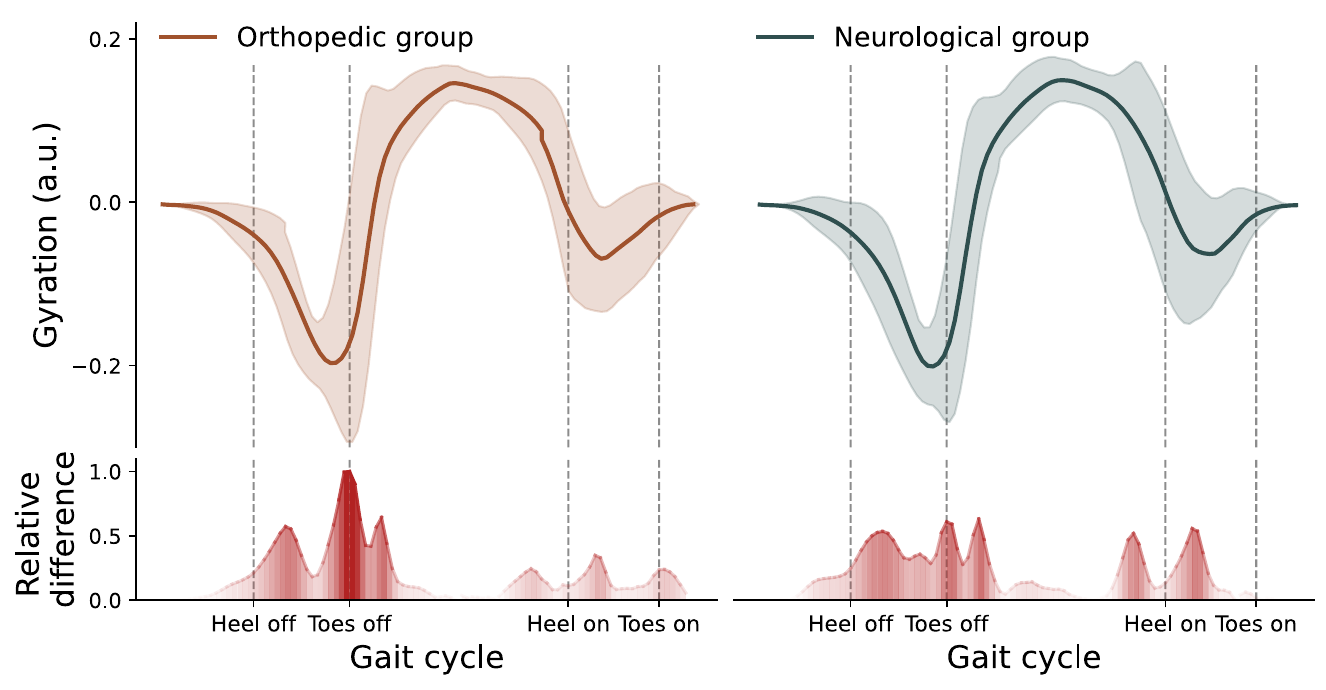} \\
    \small (b) Comparison between healthy and pathological participants.
  \end{tabular}
\caption{
    \textbf{Application of PerCDL to Locomotion Data}.
    (a) Common atom identified by PerCDL (grey) and averaged personalized gait cycle of the healthy population (blue). The shaded region represents two standard deviations.
    (b) Population-averaged personalized atoms learned by PerCDL in the Orthopedic (brown) and Neurological (green) groups, with two standard deviations (shaded areas). 
    The red-colored curves represent the normalized relative variability in the gait cycle for the current population compared with the Healthy group.
}
\label{fig:gait}
\end{figure*}

\subsection{Application to Gait Analysis}
\label{sec:experiments:gait_analysis}

We now apply PerCDL to real-world healthy and pathological locomotion data~\citep{truong2019} (more details are provided in \Cref{sec:app:locomotion}).
\(150\) participants were instructed to walk at their preferred speed in a straight line on a leveled surface for \(10\) meters, while an inertial measurement unit measured their foot angular velocity in the sagittal plane. 
Participants were divided into three equal-sized groups of size \(n=50\) (Healthy, Orthopedic, and Neurological) according to their health status.
The common dictionary was initialized with the gait cycle extracted from a randomly chosen healthy participant. 

One of the first task in gait analysis is to identify the gait cycles within each locomotion signal.
PerCDL's ability to successfully recover the gait cycles, through the activation matrix $\bm{Z}$, is summarized in \Cref{tab:gait_segmentation}.
We obtain results comparable to the state-of-the-art gait segmentation method introduced in \citet{voisard2023automatic}.
More details on this experiment can be found in \Cref{sec:app:gait_segmentation}.

\begin{table}[!hbt]
\centering
\caption{
    \textbf{Segmentation Performance of PerCDL vs. SOTA}. 
    FPR: False Positive Rate.
}
\begin{tabular}{l|l|l}
                                       & \textbf{Voisard et al.} & \textbf{PerCDL} \\ \hline
\textbf{Sensitivity}         & 0.959                   & 0.988           \\
\textbf{FPR} & 0.004                   & 0.033          
\end{tabular}
\label{tab:gait_segmentation}
\end{table}

However, PerCDL is not only capable to efficiently segment locomotion signals, it can also be used to represent, interpret and analyze such data. 
\Cref{fig:gait}(a) shows that the common atom identified (grey line) is similar to the average of the personalized atoms in the healthy population.
Intuitively, the shared characteristics within a population should roughly resemble the average of its healthy constituents.
Thus, the shared structure identified by PerCDL carries an intrinsic meaning and serves as a coherent foundation for signal-specific personalization through time warps.
This finding is promising as it could facilitate the monitoring of fine deviations from a ``healthy baseline'' that may be indicative of the start of a pathological process \citep{konig2016revealing, vidal2021perceptual}. 

We then examine the personalized atoms found by PerCDL in the different pathology groups (\Cref{fig:app:gait_personalization}). 
While all groups present similar gait cycles, discernible inter-individual differences can be identified in the foot kinematics during specific phases of the gait cycle. 
\Cref{fig:gait}(b) highlights this point via the representation of the normalized relative difference in local gait variability between each pathological group and the healthy population. 
Our results suggest that the Orthopedic group diverges from a healthy gait cycle mainly at the beginning of the swing phase ("Toes off"), while the neurological group differs mainly at the end of the stance phase (between "Heel off" and "Toes off") and right before the end of the swing phase ("Heel on"). 

More experiments, including a comparison between IndCDL, PopCDL and PerCDL, are available in the Appendix (\Cref{sec:app:locomotion}). 
Overall, our findings demonstrate the effectiveness of PerCDL for gait analysis in the identification of both common and personal structures. 

Furthermore, experiments on an ECG dataset in \Cref{sec:app:ECG_analysis} reveals that personalization parameters extract relevant characteristics from the input signals, such that they can be used as features in pathology classification.


\section{DISCUSSION}

\subsection{Related Works}
\label{sec:related_works}

CDL~\citep{grosse2007shift, zheng2016efficient, morup2008shift} is a well-established method for time series representation that incorporates translation-invariance to the DL framework~\citep{kreutz2003dictionary,tovsic2011dictionary}.
Interestingly, CDL shares common roots with Convolutive Non-Negative Matrix Factorization (cNMF or cNMFsc, the latter imposing sparseness constraints; \citealp{smaragdis2006convolutive, o2008discovering, wang2009multiplicative}), which has been employed in speech separation and the analysis of articulatory movement primitives during speech production \citep{ramanarayanan2013spatio}. 
Specifically, \citet{ramanarayanan2013spatio, smaragdis2006convolutive} have analyzed individual speech patterns by examining their constituent atoms using a method similar to IndCDL.
Theoretical connections can also be made with the deep learning litterature: PopCDL may be understood as a very specific Vector Quantized Variational Auto-Encoder (VQ-VAE) model~\citep{van2017neural} if the common dictionary $\mathbf{\Phi} \in \mathbb{R}^{K \times L\times P}$ is interpreted as the discrete latent space. 
Nonetheless, the projection step onto the closest vector is performed in the latent space in VQ-VAE but occurs during the sparse-coding step in CDL.
Consequently, after the sparse coding step, we only need to learn the discrete embeddings (the atoms) from the observations. 

Multiple extensions of CDL were developed to further enforce structure into the atoms.
For instance, \citeauthor{soh2021group} proposed group-invariant DL, which complements the shift-invariance property with rotation invariance.
Others have introduced parametric forms on the atoms to analyze seismic data~\citep{chen2023parametric,turquais2018parabolic} or sketch vectorization~\citep{shaheen2017constrained}.
Parameterisation of the atoms is also central to PerCDL, though specifically to model global and local structures.

In parallel, the advent of individual-level data has precipitated the development of methodologies designed to create personalised representations in a plethora of applications from federated learning and unsupervised learning~\citep{dinh2020personalized,huang2023federated}.
In representation learning in particular, personalized variants of principle component analysis~\citep{ozkara2023personalized, shi2024personalized}, matrix/tensor decomposition~\citep{hu2023personalized, shi2023heterogeneous}, and dictionary learning~\citep{liang2024personalized} were introduced.
Although our work follows the aforementioned trend by proposing a personalized variant of CDL, it differs in that it considers \textit{personalizations} that are specific to time-series data. 
The most closely related approach is perhaps the Generalized Canonical Time Warping method (GCTW; \citealp{xu2023generalized}). 
GCTW was proposed as a means of resolving temporal misalignment between the input data and the atoms by jointly learning the dictionary and time warping matrices.
Unlike PerCDL, however, GCTW is a DL method that focuses on \textit{counteracting} temporal variability and therefore does not attempt to learn an interpretable representation of the local variations.


\subsection{Conclusion}
\label{sec:limitations}
This work introduced Personalized Convolutional Dictionary Learning (PerCDL), a framework for learning interpretable representations of time-series datasets that captures both global structures and local variations around these commonalities.
A meta-algorithm was proposed and its theoretical performance was evaluated.
Theoretical consistency guarantees were obtained under the typical conditions used in the mixed-effects models literature.
The performance of PerCDL was empirically demonstrated on synthetic and real-world human physiological signals.

The proposed framework is general in nature and could be extended to other time-series-related tasks by adapting the transformation function.
The performance of PerCDL on other modalities, (such as images; \citep{shaheen2017constrained}), is an interesting avenue for research and is left to future works.

The primary limitation of the proposed method is that of computational cost associated with the IPU step.
To alleviate any potential scalability issue, a federated-learning variant of PerCDL was proposed.
This represents a promising research direction that will be further refined in subsequent work.

\subsubsection*{Acknowledgments}
K. Kim was supported by a grant from AWS AI to Penn Engineering's ASSET Center for Trustworthy AI.

\bibliographystyle{unsrtnat}
\bibliography{references_paper}

\clearpage
\section*{Checklist}
 \begin{enumerate}

 \item For all models and algorithms presented, check if you include:
 \begin{enumerate}
   \item A clear description of the mathematical setting, assumptions, algorithm, and/or model. \textbf{Yes}. A clear description of the mathematical setting, assumptions and algorithm are presented throughout \Cref{sec:methodology}.
   \item An analysis of the properties and complexity (time, space, sample size) of any algorithm. \textbf{Yes}. The convergence rate of the method is presented in \Cref{sec:theory} and detailed in \Cref{sec:fulltheory}. The complexity of the algorithm (the federated learning version) is given in \Cref{sec:introduction} and detailed in \Cref{sec:federated}.
   \item (Optional) Anonymized source code, with specification of all dependencies, including external libraries. \textbf{Yes}. Source code for the methods presented will be made fully available online.
 \end{enumerate}

 \item For any theoretical claim, check if you include:
 \begin{enumerate}
   \item Statements of the full set of assumptions of all theoretical results. \textbf{Yes}. The no-overlap between atoms assumption is first presented at the end of \Cref{sec:introduction} and then at the beginning of \Cref{sec:theory}. The second assumption (known activations) is presented at the beginning of \Cref{sec:theory}. The final assumption (known personalization parameters) is given in \Cref{theorem:convergence1}.
   \item Complete proofs of all theoretical results. \textbf{Yes}. Informal proofs are provided in the main text (\Cref{sec:theory}), and complete proofs in the appendix (\Cref{sec:fulltheory}). 
   \item Clear explanations of any assumptions. \textbf{Yes}. The no-overlap between atoms assumption is discussed at the end of \Cref{sec:introduction}. The second assumption (known activations) is explained during its presentation (\Cref{sec:theory}). The final assumption (known personalization parameters) is discussed below \Cref{theorem:convergence1}.
 \end{enumerate}

 \item For all figures and tables that present empirical results, check if you include:
 \begin{enumerate}
   \item The code, data, and instructions needed to reproduce the main experimental results (either in the supplemental material or as a URL). \textbf{Yes}. All details needed to reproduce the numerical experiments are given in \Cref{sec:numerical_details}, with links to the datasets. Furthermore, code for the methods introduced in this work will be made available as a URL. 
   \item All the training details (e.g., data splits, hyperparameters, how they were chosen). \textbf{Yes}. All details regarding the numerical experiments can be found in the main text (\Cref{sec:experiments}) and in \Cref{sec:numerical_details}. 
         \item A clear definition of the specific measure or statistics and error bars (e.g., with respect to the random seed after running experiments multiple times). \textbf{Yes}. Information on the measure and error bars is given alongside each figure. 
         \item A description of the computing infrastructure used. (e.g., type of GPUs, internal cluster, or cloud provider). \textbf{Yes}. Information on the computing infrastructure used is given in \Cref{sec:numerical_details}.
 \end{enumerate}

 \item If you are using existing assets (e.g., code, data, models) or curating/releasing new assets, check if you include:
 \begin{enumerate}
   \item Citations of the creator If your work uses existing assets. \textbf{Yes}. The authors of the CSC optimization scheme \citep{Charles24} and the gait segmentation method \citep{voisard2023automatic} were cited.
   \item The license information of the assets, if applicable. \textbf{Not Applicable}.
   \item New assets either in the supplemental material or as a URL, if applicable. \textbf{Yes}. The authors code for the methods discussed in the paper will be available via a URL.
   \item Information about consent from data providers/curators. \textbf{Not Applicable}.
   \item Discussion of sensible content if applicable, e.g., personally identifiable information or offensive content. \textbf{Not Applicable}.
 \end{enumerate}

 \item If you used crowdsourcing or conducted research with human subjects, check if you include:
 \begin{enumerate}
   \item The full text of instructions given to participants and screenshots. \textbf{Not Applicable}.
   \item Descriptions of potential participant risks, with links to Institutional Review Board (IRB) approvals if applicable. \textbf{Not Applicable}.
   \item The estimated hourly wage paid to participants and the total amount spent on participant compensation. \textbf{Not Applicable}.
 \end{enumerate}

 \end{enumerate}

\appendix
\newpage
\onecolumn
\section{COMPLETE PRESENTATION OF THEORETICAL GUARANTEES}
\label{sec:fulltheory}

\subsection{Rotations}
\label{sec:rotations_appendix}
\Cref{sec:time_complexity} mentioned a complexity of the IPU step in $\mathcal{O}(SKMNP T_{\text{grad}})$ for the general case, with $T_{\text{grad}}$ the number of gradient steps.
However, this complexity can be decreased if more structure is assumed on $f$.

Let us consider the particular case of orthogonal transformations.
Some multivariate time series, for example hand-writing-related signals~\citep{vayer2022time}, naturally exhibit rotations in space.
To model such data in the PerCDL framework, let us define the rotation transformation of $\bm{\phi}$, $f_r(\bm{\phi},\bm{a})=\bm{a}\bm{\phi}$, for any $\bm{\phi},\bm{a} \in (\mathbb{R}^P)^L\times \mathcal{O}_P$.
Let $\bm{x}^s$ be a signal whose segments with non-zero activation are written $X_i$.
\Cref{lemma:rotations} shows that the IPU update can be solved directly.

\begin{lemma}
\label{lemma:rotations}
    Given an atom $\bm{\phi} \in (\mathbb{R}^P)$ and raw data $X_i\in (\mathbb{R}^P)$ for any $i\in[Q]$, where $Q$ is the number of repeated noisy patterns in an individual time series, the solution of $\argmin{O\in \mathcal{O}_P} \sum_{i=1}^m |X_i-O\phi|^2$ is $V U^\top$ where $\bm{\phi} \sum_{i=1}^m X_i^\top/m = U \Sigma V^\top$ is a singular values decomposition.
\end{lemma}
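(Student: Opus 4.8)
The statement is an instance of the \emph{orthogonal Procrustes problem}, and the plan is to reduce the quadratic least-squares objective to a linear trace functional on $\mathcal{O}_P$, which can then be maximized explicitly via the singular value decomposition.

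First I would expand the objective: for any $O \in \mathcal{O}_P$, orthogonality gives $\norm{O\phi}^2 = \norm{\phi}^2$, so that
\begin{equation*}
    \sum_{i=1}^m \norm{X_i - O\phi}^2 = \sum_{i=1}^m \norm{X_i}^2 + m\,\norm{\phi}^2 - 2\sum_{i=1}^m X_i^\top O \phi .
\end{equation*}
The first two terms are independent of $O$, and writing each scalar $X_i^\top O \phi = \mathrm{tr}(O\phi X_i^\top)$ and summing over $i$, the minimization is equivalent to $\argmax{O \in \mathcal{O}_P} \mathrm{tr}(OM)$ with $M := \phi \sum_{i=1}^m X_i^\top \in \R^{P \times P}$ (the factor $1/m$ appearing in the statement is immaterial for the $\argmax$ and only rescales the singular values).

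Then I would introduce the SVD $M = U\Sigma V^\top$, $U,V \in \mathcal{O}_P$, $\Sigma = \mathrm{diag}(\sigma_1,\dots,\sigma_P)$ with $\sigma_j \geq 0$, and use cyclicity to write $\mathrm{tr}(OM) = \mathrm{tr}(V^\top O U\, \Sigma)$. Setting $R := V^\top O U$, which ranges bijectively over $\mathcal{O}_P$ as $O$ does, and using that $\Sigma$ is diagonal and that the columns of $R$ are unit vectors (so $|R_{jj}| \leq 1$), one gets $\mathrm{tr}(R\Sigma) = \sum_{j=1}^P R_{jj}\sigma_j \leq \sum_{j=1}^P \sigma_j$, with equality at $R = I_P$. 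Translating back, $R = I_P$ corresponds to $O = VU^\top$, which lies in $\mathcal{O}_P$, so this is the claimed minimizer.

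The argument is essentially routine; the only point requiring attention is that $\mathcal{O}_P$ here is the full orthogonal group, so no determinant correction is needed (unlike the $\mathrm{SO}(P)$ variant, where one would instead take $O = V\,\mathrm{diag}(1,\dots,1,\det(VU^\top))\,U^\top$), and that the reduction in the first step relies on $O$ being norm-preserving. The same computation extends verbatim to atoms with $L>1$ samples sharing a single rotation, replacing $M$ by $\sum_i \sum_{l=1}^L \phi_l (X_i)_l^\top$.
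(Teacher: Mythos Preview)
Your proposal is correct and follows essentially the same route as the paper: expand the squared norm, use orthogonality to drop the $\norm{O\phi}^2$ term, reduce to maximizing $\mathrm{tr}(OM)$ with $M=\phi\sum_i X_i^\top$, substitute the SVD, and read off $O=VU^\top$. Your write-up is in fact slightly more explicit than the paper's (you spell out the bound $\sum_j R_{jj}\sigma_j\le\sum_j\sigma_j$ and note the $\mathrm{SO}(P)$ distinction), but the argument is the same.
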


\begin{proof}
    We have, 
    \begin{align}
        \sum_{i=1}^m |X_i-O\phi|^2 &= \sum_{i=1}^m |X_i|^2+|O\phi|^2-2\operatorname{tr}(X_i^\top O\phi) \\
        &= \left[\sum_{i=1}^m |X_i|^2+|\phi|^2 \right] -2\operatorname{tr}\left(\phi\sum_{i=1}^m X_i^\top O\right),
    \end{align}
    where we used that $|O\phi|^2=|\phi|^2$ since $O\in\mathcal{O}_P$. 
    Then, 
    \begin{equation}
        \argmax{O\in \mathcal{O}_P}\operatorname{tr}\left(\phi\sum_{i=1}^m X_i^\top O\right) = \argmin{O\in \mathcal{O}_P} \sum_{i=1}^m |X_i-O\phi|^2,
    \end{equation}
    and 
    \begin{equation}
        \operatorname{tr} \left( \phi\sum_{i=1}^m X_i^\top O \right) = \operatorname{tr} \left( \Sigma V^\top O U \right),
    \end{equation}
    which gives $O=VU^\top$ since $\Sigma$ is a non-negative diagonal and $O$ is orthogonal.
\end{proof}

Given \Cref{lemma:rotations}, the complexity of IPU drops from $\mathcal{O}(SKMNPT_{\mathrm{grad}})$ to $\mathcal{O}(SKP^3)$. 
A similar update rule was shown in \cite{soh2021group} for group-invariant DL.

\subsection{Toward Mixed-Effects Models}

\subsubsection{Alternative Formulation of \Cref{lemma:assumptions} and Proof}
\Cref{sec:theory} introduced \Cref{assumption:structure}, the foundation hypothesis used to derive a convergence rate for the MLE on the common atom $\bm{\phi}_*$.
\Cref{lemma:assumptions} then proved that \Cref{assumption:structure} is met most of the time in practice.

In what follows, we propose an alternative, more formal statement of this Lemma.
\begin{lemma}
 \label{lemma:app:assumptions}
	Assumption \Cref{assumption:structure} is met in the two following cases:
	\begin{itemize}
		\item (Time warping transformations) For any $\bm{a} \in \Theta \subset \R^M$, $\mathcal{L}(\bm{a})=\mathcal{T}_{\psi_{\bm{a}},\sigma}$ where $\mathcal{T}_{\cdot,\sigma}$ and $\psi$ are defined in Equations \eqref{eq:transformation_warping} and \eqref{eq:psi_def} with $\sigma$ small enough.
		For any $s\in[S]$, $\bm{a}^s_* \overset{\text{i.i.d.}}{\sim} \mathcal{P}$ where $\mathcal{P}$ is a distribution supported on $\Theta$ such that for any $l\in[L]$,
        {%
        \setlength{\belowdisplayskip}{0.0ex} \setlength{\belowdisplayshortskip}{0.0ex}
        \setlength{\abovedisplayskip}{0.5ex} \setlength{\abovedisplayshortskip}{0.5ex}
		\begin{equation}\label{eq:cond_S} 
		S_l=\sum_{i=1}^L \mathbb{P}( L\psi(a)(i/L)\in (l-1/2,l+1/2) )>0 .
		\end{equation}
        }%
		 \item (Rotations) For any $\bm{a} \in \Theta \subset \R^M$, $\mathcal{L}(\bm{a})\in \mathcal{O}_{P}(\R)$.
	\end{itemize}
\end{lemma}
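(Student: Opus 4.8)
The plan is to verify, for each of the two transformation classes, the three requirements of \Cref{assumption:structure}: (i) existence of a linear operator $\mathcal{L}(\bm{a})$ with $\bm{\phi}_*^s=\mathcal{L}(\bm{a}_*^s)\bm{\phi}_*$; (ii) integrability $\mathbb{E}_{\mathcal{P}}(|\mathcal{L}(\bm{a}_*^s)^\top\mathcal{L}(\bm{a}_*^s)|)<\infty$; and (iii) strict positive definiteness of $G:=\mathbb{E}_{\mathcal{P}}(\mathcal{L}(\bm{a}_*^s)^\top\mathcal{L}(\bm{a}_*^s))$. The rotation case is essentially immediate and I would dispatch it first: taking $\mathcal{L}(\bm{a})=\bm{a}\in\mathcal{O}_P$ gives $\mathcal{L}(\bm{a})^\top\mathcal{L}(\bm{a})=I_P$ identically, so integrability is trivial and $G=I_P\succ 0$ with $\rho=1$. (A small caveat worth a sentence: when $f_r$ acts atomwise across the $L$ samples, the relevant operator on $(\R^P)^L$ is $I_L\otimes\bm{a}$, whose Gram matrix is still the identity, so nothing changes.)

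The substantive case is time warping. First I would recall from \eqref{eq:transformation_warping}--\eqref{eq:psi_def} that $\mathcal{T}_{\psi_{\bm{a}},\sigma}$ is, by the explicit interpolation formula, a \emph{linear} map in $\bm{\phi}$, so it is represented by a matrix $\mathcal{L}(\bm{a})\in\mathcal{M}_{L,L}(\R)$ (tensored with $I_P$ in the multivariate case), settling (i). For (ii), note that $\Theta$ is compact (it is the intersection of the bounded polytope $\{\sum_w|a_w^d|\le 1\}$ over $d$) and $\bm{a}\mapsto\mathcal{L}(\bm{a})$ is continuous, hence $\sup_{\bm{a}\in\Theta}\|\mathcal{L}(\bm{a})\|<\infty$; since $\mathcal{P}$ is supported on $\Theta$, integrability is automatic. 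The crux is (iii). I would argue by contradiction: suppose $v\in\R^L\setminus\{0\}$ with $v^\top G v=0$, i.e.\ $\mathbb{E}_{\mathcal{P}}\|\mathcal{L}(\bm{a}_*^s)v\|^2=0$, forcing $\mathcal{L}(\bm{a})v=0$ for $\mathcal{P}$-a.e.\ $\bm{a}$. Picking a coordinate $l$ with $v_l\neq 0$, the hypothesis \eqref{eq:cond_S}, $S_l>0$, says precisely that with positive $\mathcal{P}$-probability the warped grid lands near index $l$, which (in the $\sigma\to 0$ limit where $w_{l'}(\sigma,\cdot)$ concentrates on the indicator $\mathds{1}_{(l'-1/2,l'+1/2)}(Lt)$) makes the $l$-th column of $\mathcal{L}(\bm{a})$ nonzero with positive probability; more carefully, one shows that on this positive-probability event $\mathcal{L}(\bm{a})v\neq 0$, contradicting the a.e.\ vanishing. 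This yields $G\succ 0$ and defines $\rho=\lambda_{\min}(G)>0$.

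The step I expect to be the main obstacle is the $\sigma\to 0$ limiting argument in (iii): one must make precise that ``$\sigma$ small enough'' suffices, i.e.\ show that for sufficiently small $\sigma$ the matrix $\mathcal{L}(\bm{a})$ is close (uniformly over the positive-probability event) to the nearest-neighbour interpolation matrix whose entries are honest $0/1$ indicators, so that the combinatorial condition $S_l>0$ transfers to genuine non-degeneracy of $\mathcal{L}(\bm{a})v$. This requires a continuity/uniform-convergence estimate for $w_l(\sigma,t)\to\mathds{1}_{(l-1/2,l+1/2)}(Lt)$ away from the breakpoints $t=(l\pm\tfrac12)/L$, plus a measure-theoretic argument that $\mathcal{P}$ assigns zero mass to the (measure-zero) set of $\bm{a}$ whose warped grid hits a breakpoint exactly — or, alternatively, a more robust argument avoiding the limit by lower-bounding $v^\top G v$ directly using the strict positivity of the Gaussian weights $w_l(\sigma,t)>0$. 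I would likely adopt the latter, cleaner route: since every $w_l(\sigma,t)>0$ strictly, for any fixed $\bm{a}$ the rows of $\mathcal{L}(\bm{a})$ are convex combinations with full support, and a careful bookkeeping of which sampling times $\psi_{\bm{a}}(t_i)$ fall in which cell — quantified exactly by $S_l$ — gives a strictly positive lower bound on $\mathbb{E}\,\|\mathcal{L}(\bm{a})v\|^2$, completing the proof with an explicit (if unwieldy) expression for $\rho$.
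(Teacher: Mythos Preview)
Your treatment of the rotation case and of items (i)--(ii) for time warping is correct and aligns with the paper. The gap is in your argument for (iii), the positive definiteness of $G$.

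The clause ``more carefully, one shows that on this positive-probability event $\mathcal{L}(\bm{a})v\neq 0$'' is exactly where the proof lives, and neither of your two routes fills it. Knowing that the $l$-th column of $\mathcal{L}(\bm{a})$ is nonzero does not rule out cancellation with the adjacent column: in the $\sigma\to 0$ regime, a row $i$ with $L\psi_{\bm a}(i/L)\in(l-\tfrac12,l+\tfrac12)$ contributes approximately $(1-\alpha)v_l+\alpha v_{l+1}$ with $\alpha\in(-\tfrac12,\tfrac12)$, which need not be bounded away from zero if $v_{l+1}\neq 0$. Your ``cleaner route'' rests on a false premise: the rows of $\mathcal{L}(\bm{a})$ are \emph{not} convex combinations with full support, because the interpolation term $c_l(t)(\phi_{l+1}-\phi_l)$ with $c_l(t)=t-t_l$ of either sign produces signed coefficients on the $\phi_m$'s; a matrix with positive row sums but signed entries can have a nontrivial kernel.

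The paper proceeds differently. It expands $\mathbb{E}\|\mathcal{T}_{\psi_{\bm a},\sigma}x\|^2$ explicitly, replaces the softmax weights $w_l$ by indicators $\mathds{1}_{C_l}(L\psi_{\bm a}(i/L))$ to obtain a limiting quadratic form, and applies Jensen to the conditional expectation of $U_l^2$ on the event $\{L\psi_{\bm a}(i/L)\in C_l\}$. This yields a lower bound $x^\top\bar D^\top\mathbf{S}\bar D\,x$ where $\mathbf{S}=\mathrm{diag}(S_l)\succ 0$ by \eqref{eq:cond_S} and $\bar D$ is upper bi-diagonal with diagonal entries $1-\sum_i\bar P_{i,l}\alpha_{i,l}\in(\tfrac12,\tfrac32)$, hence invertible. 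Dominated convergence then shows the actual quadratic form converges to this limit as $\sigma\to 0$, and openness of the invertible matrices gives the conclusion for all sufficiently small $\sigma$. The bi-diagonal structure is precisely the device that handles the adjacent-column cancellation your sketch glosses over. (If you want to rescue the contradiction route, take $l$ to be the \emph{largest} index with $v_l\neq 0$, so that $v_{l+1}=0$ and the limiting row contribution $(1-\alpha)v_l$ is manifestly nonzero; but this is the same upper-triangular observation encoded in $\bar D$.)
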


\begin{proof}
	The case of rotations is trivial since for any $s \in [S]$, $\mathcal{L} \left( \bm{a}^s_*\right)^\top \, \mathcal{L} \left( \bm{a}^s_* \right) = I_L$.

	The case of discrete temporal reparameterizations is less obvious. 
	 For any $\bm{x} \in (\R^P)^L$, we have,
	 \begin{equation}
		\left( \mathbb{E}_{\bm{a} \sim \mathcal{P}} \left( \mathcal{T}_{\psi_{\bm{a}},\sigma}^\top \, \mathcal{T}_{\psi_{\bm{a}},\sigma} \right) x \right)^\top x
        = \mathbb{E}\left((\mathcal{T}_{\psi_{\bm{a}},\sigma}^\top \, \mathcal{T}_{\psi_{\bm{a}},\sigma} \, x)^\top x\right) 
		= \mathbb{E}\left((\mathcal{T}_{\psi_{\bm{a}},\sigma} \, x)^\top \, \mathcal{T}_{\psi_{\bm{a}},\sigma} \, x\right).
	 \end{equation}
  
	 We have $\mathbb{E}\left((\mathcal{T}_{\psi_{\bm{a}},\sigma} \,x)^\top \, \mathcal{T}_{\psi_{\bm{a}},\sigma} \, x\right) = \sum_{i=1}^L \mathbb{E}\left([\mathcal{T}_{\psi_{\bm{a}},\sigma} \, x]_i^2 \right)$. 
	 For any $i\in[L]$, denoting by $U_l=x_l +(L\psi_{\bm{a}}(i/L)-l)(x_{l+1}-x_l)$,
	 \begin{align}
		\mathbb{E}\left([\mathcal{T}_{\psi_{\bm{a}},\sigma}x]_i^2\right)
		&=\mathbb{E}\left(\left(\sum_{l=1}^{L} w_l(\psi_{\bm{a}},\sigma,i)U_l\right)^2\right) \\
		&= \mathbb{E}\left(\sum_{l=1}^{L}\sum_{m=1}^{L} w_l(\psi_{\bm{a}},\sigma,i)w_m(\psi_{\bm{a}},\sigma,i)U_l U_m\right)\\
		&=\sum_{l=1}^{L}\sum_{m=1}^{L} \underbrace{\mathbb{E}\left(w_l(\psi_{\bm{a}},\sigma,i)w_m(\psi_{\bm{a}},\sigma,i) U_l U_m \right)}_{=\Delta_{k,m,i}}. 
	 \end{align}
	 
	 With $C_{l} = \left( l-\frac{1}{2}, l+\frac{1}{2} \right)$ for any $l\in[L]$, note that $w_l\left(\psi(\bm{a}\right),\sigma, i)\approx \mathds{1}_{ C_{l}} \left(L\psi_{\bm{a}}(i/L) \right)$ for any $i\in[L]$.
	 Therefore, computing $\Delta^*_{l,m,i} = \mathbb{E} \left( \mathds{1}_{C_{l}} \left( L \, \psi_{\bm{a}} \, (i/L) \right) \mathds{1}_{C_{m}} \left( L \, \psi_{\bm{a}} \, (i/L) \right) U_l U_m \right)$ for any $(l,m,i)\in[L]^3$ is interesting.
	 The proof of $\Delta^*_{l,m,i}\approx \Delta_{l,m,i}$ is postponed to the end of the current proof.
	  
	 We have,
	 \begin{align}
		\Delta^*_{l,m,i}
        &= \mathbb{E}\left( \mathds{1}_l(m)\mathds{1}_{C_{l}} \left( L \, \psi_{\bm{a}} \, (i/L) \right) U_l^2 \right) \\
		&= \mathds{1}_l(m)\mathbb{E}\left( U_l^2 \,|  L \, \psi_{\bm{a}} \, (i/L) \in C_{l}  \right) \mathbb{P} \left( L \, \psi_{\bm{a}} \, (i/L) \in C_{l}\right).
	 \end{align}
	 
	 Then, denoting by $P_{i,l}= \mathbb{P} \left( L \, \psi_{\bm{a}} \, (i/L) \in C_{l} \right)$, $S_l=\sum_{i=1}^L P_{i,l}>0$ (by \eqref{eq:cond_S}) and $\bar{P}_{i,l}=P_{i,l}/S_l$ for any $(i,l)\in[L]^2$,
	 \begin{align}
		\sum_{i=1}^L \sum_{l=1}^{L} \sum_{m=1}^{L}\Delta^*_{l,m,i}
		&= \sum_{i=1}^L \sum_{l=1}^{L} \mathbb{E} \left( U_l^2 \,| L \, \psi_{\bm{a}} \, (i/L) \in C_{l} \right) P_{i,l} \\
		& \geq \sum_{l=1}^{L}S_l \sum_{i=1}^L \left[ x_l \left( 1-\alpha_{i,l} \right)+x_{l+1} \alpha_{i,l} \right]^2 \frac{P_{i,l}}{S_l},
	 \end{align}
	 where we applied the Jensen inequality and wrote $\alpha_{i,l} = \mathbb{E} \left( L \, \psi_{\bm{a}} \, (i/L)-l \; | \; L\psi_{\bm{a}}(i/L)\in C_{l} \right)$ for any $(i,l)\in[L]^2$. 
	 Note that $\alpha_{i,l}\in (-1/2,1/2)$ since the expectation is taken on the event $L \, \psi_{\bm{a}} \, (i/L) \in C_{l}$.
	 
     \begin{align}
    	\sum_{i=1}^L \sum_{l=1}^{L}\sum_{m=1}^{L}\Delta^*_{l,m,i}
    	& \geq
    	\sum_{l=1}^{L}S_l \sum_{i=1}^L \left( x_l(1-\alpha_{i,l})+x_{l+1}\alpha_{i,l} \right)^2 \bar{P}_{i,l} \\
    	&\geq \sum_{l=1}^{L}S_l \left( x_l(1-\sum_{i=1}^L \bar{P}_{i,l} \alpha_{i,l})+x_{l+1}\sum_{i=1}^L \bar{P}_{i,l} \alpha_{i,l} \right)^2 
    	= x^\top \bar{D}^\top  \mathbf{S} \bar{D} x,
     \end{align}
	 where 
	 \begin{align}
	    &B_1 = \left( 1-\sum_{i=1}^L \bar{P}_{i,1} \alpha_{i,1} \right), \quad 
	    \bar{D}= \begin{pmatrix} B_1 &\sum_{i=1}^L \bar{P}_{i,1} \alpha_{i,1} & 0 & (0) \\
	    	0 & \left(1-\sum_{i=1}^L \bar{P}_{i,2} \alpha_{i,2}\right)&\sum_{i=1}^L \bar{P}_{i,2}\alpha_{i,2} & (0) \\
	        0 & 0 & \ldots & \ldots \end{pmatrix} \\
	    & \mathbf{S} = \operatorname*{diag}\left( (S_l)_{l\in[L]} \right).
	 \end{align}
	$\mathbf{S}$ is a positive diagonal matrix by \eqref{eq:cond_S} and $\bar{D}$ is a bi-diagonal matrix with a positive diagonal, thus $x^\top \bar{D}^\top  \mathbf{S} \bar{D} x>0$ is positive.

	Then, coming back to $\Delta^*_{l,m,i}\approx \Delta_{l,m,i}$, for any $x\in(\R^P)^L$,
	 \begin{equation}
		\sum_{i =1}^{L}\sum_{l=1}^{L}\sum_{m=1}^{L} \Delta_{k,m,i}= x^\top \tilde{W}_\sigma x
	\end{equation}
	where $\tilde{W}_\sigma$ is a matrix with coordinates which are functions of element of the form $\mathbb{E}\left( w_l \left( \psi_{\bm{a}}, \sigma, i \right) \, w_m \left(\psi_{\bm{a}}, \sigma, i \right) \, E_{i,l,m} \right) $ with $|E_{i,l,m}|\leq L^2$.
	Using that $w_l\left(\psi_{\bm{a}},\sigma,i\right) \in [0,1]$ converges almost surely toward $\mathds{1}_{ C_{l}}\left( L \, \psi_{\bm{a}} \, (i/L) \right)$ as $\sigma\to 0^+$, we have, using the Dominated Convergence Theorem, that $\tilde{W}_\sigma$ converges toward a matrix $\tilde{W}_*$ such that:
	\begin{equation}
		x^\top \tilde{W}_* x=\sum_{i=1}^L \sum_{l=1}^{L}\sum_{m=1}^{L}\Delta^*_{l,m,i}\geq x^\top \bar{D}^\top  \mathbf{S} \bar{D} x>0
	\end{equation}
	Therefore, since the space of invertible matrix is open, there exists $\sigma_0>0$ such that for any $\sigma>\sigma_0$, we have that $\mathbb{E}_{A\sim \mathcal{P}}(\mathcal{T}_{\psi_{\bm{a}},\sigma}^\top\mathcal{T}_{\psi_{\bm{a}},\sigma})$ is a symmetric positive definite matrix.
\end{proof}

\subsubsection{Circling Back to \Cref{lemma:assumptions}}
Let us define $\mathsf{T_\psi}^l = \left(\exists i \in [L]: L \, \psi_{\bm{a}} \, \left(i/L\right) \in \left(l-1/2,l+1/2\right) \right)$ for any $l\in[L]$.
For any $\epsilon>0$,
\begin{equation}
    P_{Id}^\epsilon
    = \mathbb{P}(\cap_{i\in[L]} \left( L \, \psi_{\bm{a}} \, \left(i/L \right)\in (i-\epsilon,i+\epsilon) \right)) 
    \leq \mathbb{P}(\cap_{l=1}^L\mathsf{T_\psi}^l)^L  \leq \prod_{l=1}^L \mathbb{P}(\mathsf{T_\psi}^l)\leq\prod_{l=1}^L S_l,
\end{equation}
and $P_{Id}^\epsilon>0$ as long as $\mathcal{P}$ has enough mass around zero since $\left(\bm{a}=0\right) = \left(\psi_{\bm{a}} = \text{Id} \right)$.  

Therefore, condition \eqref{eq:cond_S} from \Cref{lemma:app:assumptions} can be considered as very weak.
Informally, on the event $\left(L\psi_{\bm{a}}(i/L)\in (l-1/2,l+1/2)\right)$, we have $[\mathcal{T}_{\psi_{\bm{a}},\sigma}(\bm{\phi})]_i\approx \phi_l+(L\psi_{\bm{a}}(i/L)-l)(\phi_{l+1}-\phi_l)\approx \phi_l$ as $\sigma \ll 1$.

Thus, equation \eqref{eq:cond_S} states that, for any $l\in[L]$, the $l$-th sample of the common atom $\bm{\phi}$ has a positive probability to be seen in a personalized atom $\bm{\phi}_*^s$.
This statement is the one presented in \Cref{lemma:assumptions} from the main text.

\subsubsection{Proof of \Cref{theorem:convergence1}}
For completeness sake, this section will reason on a slightly more complete version of \Cref{theorem:convergence1}.
The following exposition naturally directly transposes to \Cref{theorem:convergence1}.

\begin{theorem}
\label{theorem:convergence}
	Under \Cref{assumption:structure}, \Cref{assumption:nb_observations}, and assuming that the true parameters $(\bm{a}_s^*)_{s\in[S]}$ are given, there exists almost surely an integer $\tilde{S}$, such that for any $S\geq \tilde{S}$, $\Sigma_{S,p}= \sum_{s=1}^S p_s \, \mathcal{L}\left( \bm{a}^s_* \right)^\top \, \mathcal{L}\left( \bm{a}^s_* \right)$ is invertible. 
	Then, for any $S\geq \tilde{S}$, the MLE of the common atom $\bm{\phi}_*$ is
	\begin{equation}
		\label{eq:MLE_Phi}
		\hat{\bm{\phi}}_{mle} = (\Sigma_{S,p})^{-1} Y_\Sigma, \quad Y_\Sigma=\sum_{s=1}^S \sum_{j=1}^{p_s}\mathcal{L} \left( \bm{a}^s_* \right)^\top[j] y^s[j],
	\end{equation}
	and thus we have $\hat{\bm{\phi}}_{mle} \sim \mathcal{N} \left( \bm{\phi}_*, (\Sigma_{S,p})^{-1} \right)$. 
	Moreover, $\hat{\bm{\phi}}_{mle}$ converges toward $\bm{\phi}_*$ at a rate of $\mathcal{O} \left(1/\rho\sqrt{\sum_{s=1}^S p_s}\right)$ in probability. 
\end{theorem}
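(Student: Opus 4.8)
The plan is to recognize that, once the individual parameters $(\bm{a}^s_*)_{s\in[S]}$ are held fixed, \Cref{eq:mixed_effect} together with \Cref{assumption:structure} makes the recovery of $\bm{\phi}_*$ a Gaussian linear model, and to read off the MLE, its exact law, and its convergence rate from the associated normal equations. Concretely, the $\bm{y}^s[j]$ are independent $\mathcal{N}(\mathcal{L}(\bm{a}^s_*)\bm{\phi}_*,\sigma^2 I_L)$, so minimizing the negative log-likelihood in $\bm{\phi}$ amounts to $\argmin{\bm{\phi}}\sum_{s=1}^S\sum_{j=1}^{p_s}\norm{\bm{y}^s[j]-\mathcal{L}(\bm{a}^s_*)\bm{\phi}}_2^2$. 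First I would set the gradient to zero, obtaining $\Sigma_{S,p}\,\bm{\phi}=Y_\Sigma$ with $\Sigma_{S,p}$ and $Y_\Sigma$ as in \Cref{eq:MLE_Phi}, so that $\hat{\bm{\phi}}_{\mathrm{mle}}=\Sigma_{S,p}^{-1}Y_\Sigma$ as soon as $\Sigma_{S,p}$ is invertible.

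The first substantial step is to control the random matrix $\Sigma_{S,p}$. I would use that $\big(\mathcal{L}(\bm{a}^s_*)^\top\mathcal{L}(\bm{a}^s_*)\big)_{s\in[S]}$ are i.i.d.\ and integrable with mean $\Sigma_*:=\mathbb{E}_{\mathcal{P}}\big(\mathcal{L}(\bm{a})^\top\mathcal{L}(\bm{a})\big)$, which by \Cref{assumption:structure} is symmetric positive definite with smallest eigenvalue $\rho$. The strong law of large numbers gives $S^{-1}\sum_{s=1}^S\mathcal{L}(\bm{a}^s_*)^\top\mathcal{L}(\bm{a}^s_*)\to\Sigma_*$ almost surely; continuity of the smallest eigenvalue then produces, almost surely, an integer $\tilde{S}$ with $\lambda_{\min}\big(\sum_{s=1}^S\mathcal{L}(\bm{a}^s_*)^\top\mathcal{L}(\bm{a}^s_*)\big)\geq\tfrac{\rho}{2}S$ for all $S\geq\tilde{S}$. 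Since $p_s\geq1$ and each summand is positive semidefinite, $\lambda_{\min}(\Sigma_{S,p})\geq\lambda_{\min}\big(\sum_{s=1}^S\mathcal{L}(\bm{a}^s_*)^\top\mathcal{L}(\bm{a}^s_*)\big)\geq\tfrac{\rho}{2}S$, so $\Sigma_{S,p}$ is invertible for $S\geq\tilde{S}$; applying the same law of large numbers to $\operatorname{tr}\big(\mathcal{L}(\bm{a}^s_*)^\top\mathcal{L}(\bm{a}^s_*)\big)$ and using \Cref{assumption:nb_observations} ($p_s\leq C$) yields $\operatorname{tr}(\Sigma_{S,p})=O(S)=O(\sum_s p_s)$ almost surely, since $\sum_s p_s\geq S$.

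Next, for the law, I would substitute $\bm{y}^s[j]=\mathcal{L}(\bm{a}^s_*)\bm{\phi}_*+\bm{\epsilon}^s[j]$ into $Y_\Sigma$ to get $\hat{\bm{\phi}}_{\mathrm{mle}}=\bm{\phi}_*+\Sigma_{S,p}^{-1}E$ with $E:=\sum_{s=1}^S\sum_{j=1}^{p_s}\mathcal{L}(\bm{a}^s_*)^\top\bm{\epsilon}^s[j]$. Conditionally on $(\bm{a}^s_*)_s$, $E$ is a sum of independent centered Gaussian vectors with total covariance $\sigma^2\sum_s p_s\,\mathcal{L}(\bm{a}^s_*)^\top\mathcal{L}(\bm{a}^s_*)=\sigma^2\Sigma_{S,p}$, hence $E\sim\mathcal{N}(0_L,\sigma^2\Sigma_{S,p})$ and therefore $\hat{\bm{\phi}}_{\mathrm{mle}}\sim\mathcal{N}\big(\bm{\phi}_*,\sigma^2\Sigma_{S,p}^{-1}\big)$, which is the announced law (with the convention $\sigma^2=1$ used in the statement, consistent with the Fisher information $\sigma^{-2}\Sigma_{S,p}$ of the model). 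Finally, for the rate I would bound $\norm{\hat{\bm{\phi}}_{\mathrm{mle}}-\bm{\phi}_*}_2\leq\norm{\Sigma_{S,p}^{-1}}_{\mathrm{op}}\norm{E}_2$: the first factor equals $1/\lambda_{\min}(\Sigma_{S,p})\leq 2/(\rho S)\leq 2C/(\rho\sum_s p_s)$ by \Cref{assumption:nb_observations}; for the second, $\mathbb{E}\big[\norm{E}_2^2\mid(\bm{a}^s_*)_s\big]=\sigma^2\operatorname{tr}(\Sigma_{S,p})=O(\sum_s p_s)$, so Markov's inequality gives $\norm{E}_2=O_{\mathbb{P}}\big(\sqrt{\sum_s p_s}\big)$; multiplying the two yields $\norm{\hat{\bm{\phi}}_{\mathrm{mle}}-\bm{\phi}_*}_2=O_{\mathbb{P}}\big(1/(\rho\sqrt{\sum_s p_s})\big)$, which reduces to $O_{\mathbb{P}}\big(1/(\rho\sqrt{Sp})\big)$ when $p_s\equiv p$, matching \Cref{theorem:convergence1}.

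The main obstacle is the second step, i.e.\ turning \Cref{assumption:structure} and \Cref{assumption:nb_observations} into uniform-in-$S$ spectral control of the random Gram-type matrix $\Sigma_{S,p}$ — a lower bound of order $\rho S$ on $\lambda_{\min}$ and an $O(\sum_s p_s)$ bound on $\operatorname{tr}$ — because once those estimates are in hand, computing the MLE, establishing its Gaussianity, and running the Markov argument are all routine. A secondary, bookkeeping-level point is relaxing the hypothesis that $(\bm{a}^s_*)_s$ is known to a $\sqrt{p}$-consistent plug-in, which I would handle exactly as in \cite[Condition~1]{nie2007convergence} and would not carry out in detail.
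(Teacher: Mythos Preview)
Your argument is correct and follows the same overall template as the paper: recognize that, once $(\bm{a}^s_*)_s$ are frozen, \eqref{eq:mixed_effect} with \Cref{assumption:structure} is a linear Gaussian model, write the normal equations, read off the exact Gaussian law of $\hat{\bm{\phi}}_{\mathrm{mle}}$, and extract the rate from the spectrum of $\Sigma_{S,p}$.

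The one genuinely different choice is how you establish invertibility and the spectral lower bound on $\Sigma_{S,p}$. The paper applies a \emph{weighted} strong law of large numbers \cite[Theorem~3]{jamison1965convergence} directly to $\Sigma_{S,p}/\sum_s p_s$, which requires verifying the Jamison--Orey--Pruitt condition $\limsup_x \mathsf{S}(x)/x<\infty$ from \Cref{assumption:nb_observations}; this yields the sharper conclusion $\lambda_{\min}\big(\Sigma_{S,p}/\sum_s p_s\big)\to\rho$ a.s. You instead apply the ordinary SLLN to the \emph{unweighted} average $S^{-1}\sum_s\mathcal{L}(\bm{a}^s_*)^\top\mathcal{L}(\bm{a}^s_*)$ and then sandwich via $1\le p_s\le C$ to get $\lambda_{\min}(\Sigma_{S,p})\ge\tfrac{\rho}{2}S\ge\tfrac{\rho}{2C}\sum_s p_s$. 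Your route is more elementary (no weighted-SLLN citation needed) and still delivers exactly what the theorem claims, at the cost of a constant $C$ absorbed in the $\mathcal{O}$ and of not obtaining the precise limit $\rho$ for the normalized smallest eigenvalue. For the rate, the paper reads it directly from the covariance $\Sigma_{S,p}^{-1}$, whereas your bound $\norm{\Sigma_{S,p}^{-1}}_{\mathrm{op}}\norm{E}_2$ plus $\mathbb{E}\norm{E}_2^2=\sigma^2\operatorname{tr}(\Sigma_{S,p})=O(\sum_s p_s)$ is a slightly less direct but equally valid way to arrive at the same $O_{\mathbb{P}}\big(1/(\rho\sqrt{\sum_s p_s})\big)$.
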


\begin{proof}
    First, by \cite[Theorem 3]{jamison1965convergence}, which is an extension of the Strong Law of Large Numbers (SLLN) to the weighted averages, we have that almost surely $\Sigma_{S,p}/\sum_{s=1}^S p_s\to \mathbb{E}_{\mathcal{P}} \left( \mathcal{L} \left( \bm{a}^s_* \right)^\top \, \mathcal{L} \left( \bm{a}^s_* \right) \right)$ as $S$ goes to infinity. 
    Indeed, \cite[Theorem 3]{jamison1965convergence} states that SLLN applies if and only if $\limsup_{x\to\infty} \mathsf{S}(x)/x <\infty $ with $\mathsf{S}(x)=\#\{n\geq 0: \sum_{s=1}^n p_s/p_n \leq x \} $, $\left( \bm{a}^s_* \right)_{s \in [S]}$ i.i.d. and $\mathbb{E}_{\mathcal{P}} \left( \left| \mathcal{L} \left( \bm{a}^s_*\right)^\top \, \mathcal{L} \left( \bm{a}^s_* \right) \right| \right) < \infty$.
    This condition is met since for any $n\in[S]$, $\mathsf{S}(\sum_{s=1}^n p_s/(C+1))\leq n$ because by \Cref{assumption:nb_observations}, for any $n\in[N]$,
     \begin{equation}
	\sum_{s=1}^n p_s/C\leq \sum_{s=1}^n p_s/p_n\leq \sum_{s=1}^n p_s.
    \end{equation} 
    Thus, for any $n\in[S]$, $(C+1)\mathsf{S}(\sum_{s=1}^n p_s/(C+1))/\sum_{s=1}^n p_s \leq C+1$.
    This ensures $\limsup_{x\to\infty} \mathsf{S}(x)/x <\infty $ since $x\to \mathsf{S}(x)$ is not decreasing.

    Moreover, the set of invertible matrices of $\mathcal{M}_{L,L}(\R)$ is open. Therefore, there exists almost surely $\tilde{S}$ such that, for any $S\geq \tilde{S}$, $\Sigma_{S,p}/\sum_{s=1}^S p_s $ is invertible since $\mathbb{E}_{\mathcal{P}} \left( \mathcal{L}\left( \bm{a}^s_* \right)^\top \, \mathcal{L}\left( \bm{a}^s_* \right) \right)$ is invertible (\Cref{assumption:structure}).
    In addition, $\Sigma_{S,p}$ is symmetric positive as a sum of symmetric positive matrices. Therefore, for any $S\geq \tilde{S}$, $\Sigma_{S,p}$ is symmetric definite positive and the minimum eigen values of $\Sigma_{S,p}/\sum_{s=1}^S p_s$ converges to $\rho$ almost surely.

    Secondly, the log likelihood related to $\bm{\phi}_*$ in \eqref{eq:mixed_effect} is 
    \begin{equation}
        F: \bm{\phi}_* \in (\R^P)^L \mapsto  \sum_{s=1}^S \sum_{j=1}^{p_s} \left| y^s[j] - \mathcal{L} \left( \bm{a}^s_* \right) \, \bm{\phi}_* \right|^2 / \left( 2\sigma^2 \right) + L \, \log\left( 2\pi\sigma^2 \right) /2.
    \end{equation}
    Then, writing $\nabla F(\hat{\Phi})=0$ yields,
        \begin{equation}
	       \Sigma_{S,p}\hat{\bm{\phi}}_{mle} = Y_\Sigma.
        \end{equation}
    For any $S\geq \tilde{S}$, $\Sigma_{S,p}$ is invertible.
    Therefore, we have \eqref{eq:MLE_Phi}.
    
    Hence, 
    \begin{align}
	   \hat{\bm{\phi}}_{mle} - \bm{\phi}_*
    &= (\Sigma_{S,p})^{-1} Y_\Sigma- \bm{\phi}_* \\
    &= (\Sigma_{S,p})^{-1} \left[\sum_{s=1}^S p_s \, \mathcal{L} \left( \bm{a}^s_* \right)^\top \, \mathcal{L} \left( \bm{a}^s_* \right) \, \bm{\phi}_* \right] - \bm{\phi}_* + \left( \Sigma_{S,p} \right)^{-1} \sum_{s=1}^S \sum_{j=1}^{p_s} \, \mathcal{L} \left( \bm{a}^s_* \right)^\top[j] \epsilon^s[j] \\
    &= \sum_{s=1}^S \sum_{j=1}^{p_s} \left( \Sigma_{S,p} \right)^{-1} \, \mathcal{L}\left( \bm{a}^s_* \right)^\top[j] \epsilon^s[j].
    \end{align}
    
    We sum independent normal variables of covariance matrix $\sigma^2 \left( \Sigma_{S,p} \right)^{-1} \mathcal{L} \left( \bm{a}^s_* \right)^\top \left[ \left(\Sigma_{S,p} \right)^{-1} \mathcal{L} \left( \bm{a}^s_* \right) \right]^\top$, which yields,
    \begin{equation}
	   \hat{\bm{\phi}}_{mle} - \bm{\phi}_* \sim \mathcal{N} \left( 0, \left( \Sigma_{S,p} \right)^{-1} \right).
    \end{equation}
    
    Asymptotically, the largest eigen value of $\left( \Sigma_{S,p} \right)^{-1}$ is $1/\left( \rho\sum_{s=1}^S p_s \right)$, resulting in a convergence rate of $1 / \left(\rho\sum_{s=1}^S p_s\right)$ in probability.
\end{proof}

\section{FEDERATED LEARNING META-ALGORITHM}
\label{sec:federated}

The expression of the MLE from \Cref{theorem:convergence} was used to adapt the meta-algorithm \cref{alg:PerCDL} to a Federated Learning framework under the non-overlapping assumption
In \Cref{alg:Federated_learning}, the Global Server aggregates information from the Local Servers through a weighted average. 
Each communication, from the Local Servers to the Central Server or vice versa, has a memory footprint of $\mathcal{O}(L)$ since the the transmitted vectors have nearly the same shape as the common atom. 
At each step, the complexity on the Local Server is $\mathcal{C}(\text{IPU})+\mathcal{C}(\text{CSC})=\mathcal{O}(max(KMT_{grad}N,KLN(\log(N)+P))$. 
In our experiments, $M = DW \approx 50 < L=100$ and $P<20$. 
Moreover, $n_{init}=n_{perso}=5$ is enough to reach convergence. 
Hence a final complexity per Local Server of $\mathcal{O}(N\log(N))$.

\IncMargin{2em}
\SetKwComment{Comment}{// }{}
\SetKw{Input}{Inputs}
\SetKw{Output}{Outputs}

\begin{algorithm*}[ht]
\caption{A Federated Learning algorithm to find an approximated solution of \eqref{eq:PerCDL}. Remark that $\hat{\bm{\Phi}}$ is the personalized dictionary defined in \eqref{eq:personalized_dictionnary} and $\bm{\Phi}$ is the common dictionary. We denote by $\bm{y}^s_k\left[j\right]$ the $j^{\text{th}}$ subpart of $\bm{x}^s$ where a pattern is recognized by a non-zero activation in $\bm{z}_k^s$ and $p_s^k$ the number of non-zero activation in $\bm{z}_k^s$. This algorithm works under the assumptions of \Cref{sec:theory}. The notations of \Cref{assumption:structure} are used.}
\label{alg:Federated_learning}
\Input{$\bm{x^s} \in \R^{S \times N\times P}$, $f:(\mathbb{R}^P)^L\times \mathbb{R}^M \mapsto (\mathbb{R}^P)^L $} \\ \Output{$\bm{\Phi} \in \R^{K \times L\times P}$, $\bm{A} \in \R^{S \times K \times M}$, $\bm{Z} \in \R^{S \times K \times N-L+1}$}
\BlankLine

\setcounter{AlgoLine}{0}

$\bm{\Phi}, \bm{A}, \bm{Z} \gets \text{setInitialValues}()$, for any $k\in[K]$, $c_k=1$ \Comment{Initialization}

\Comment{First estimates}
\While{$i \leq n_{\text{init}}$}{In each Local Server
\For{$1\leq s \leq S$ } {
    For any $k\in[K]$, $\bm{z}^s_k\gets \bm{z}^s_k \times c_k $  \;
    $\bm{z}^s \gets \text{CSC}\left( \bm{x}^s, \bm{z}^s, \bm{\Phi} \right)$\;
    For any $k\in[K]$, $\tilde{\phi}_k^s \gets \sum_{j=1}^{p_s^k} y^s_k\left[j\right]/p_s^k$  (Individual Barycenter) and
    $\tilde{\phi}_k^s$ is sent to the global server}
    [Central Server]\;
    For any $k\in[K]$, $\phi_k \gets \sum_{s=1}^S\tilde{\phi}_k^s/S  $ (Global Barycenter),
    $\phi_k,c_k \gets \phi_k/|\phi_k|,|\phi_k| $ (Normalization),
    and $\bm{\Phi}=(\phi_k)_{k\in[K]},(c_k)_k$ are sent to each local server. 
}

\BlankLine
\Comment{Personalization}
\While{$i \leq n_{\text{perso}}$}{In each Local Server
    \For{$1\leq s \leq S$ }{
    $\bm{z}^s \gets \text{CSC}\left( \bm{x}^s, \bm{z}^s, \bm{\Phi} \right)$\;
    $\bm{A}_s \gets \text{IPU}\left(\bm{X}_s, \bm{z}^s, \bm{\Phi},\bm{A}_s, f \right)$;
    For any $k\in[K]$, $\tilde{\Sigma}_s^k\gets p_s^k \, \mathcal{L}\left( \bm{a}^s_k \right)^\top \, \mathcal{L}\left( \bm{a}^s_k \right)$, $\tilde{\phi}_k^s \gets \sum_{j=1}^{p_s^k} \mathcal{L}\left( \bm{a}^s_k \right)^\top y^s_k\left[j\right]$ and
    $p_s^k,\bm{a}^s_k,\tilde{\phi}_k^s$ is sent to the global server}
    [Central Server]
    
    For any $k\in[K]$, $\tilde{\Sigma}_k\gets \sum_{s=1}^S p_s^k \, \mathcal{L}\left( \bm{a}^s_k \right)^\top \, \mathcal{L}\left( \bm{a}^s_k \right)$,
    $\phi_k \gets \tilde{\Sigma}_k^{-1}\sum_{s=1}^S \tilde{\phi}_k^s $ (Weighted Global Barycenter),
    
    $\phi_k,c_k \gets \phi_k/|\phi_k|,|\phi_k| $ (Normalization),
    and $\bm{\Phi}=(\phi_k)_{k\in[K]},(c_k)_k$ are sent to each local server.
}
\end{algorithm*}

\paragraph{Robustness improvement}
The personalization step might not be robust when the individual parameters are poorly estimated (recall that the update formula of the personalized common atom in \eqref{eq:MLE_Phi} assumes a perfect estimation of the individual parameters).
To increase its robustness, we follow \citep{mcmahan2017communication} to replace the update on the Local Servers by a gradient descent on both the individual parameters $\mathbf{a}^s$ and the personalized dictionaries $\tilde{\bm{\phi}}_k^s$ at the same time. 
The common dictionary can then be updated as $\bm{\phi}_k\gets \sum_{s=1}^S\tilde{\bm{\phi}}_k^s/S $ for any $k\in[K]$. 

In order to avoid over-adaptation of the personalized dictionaries, the individual parameters should be initialized with a preliminary step of IPU before running the joint update on each Local Server.
Such amendments result in a final complexity per local server asymptotically identical as the one of \Cref{alg:Federated_learning}, with an even smaller communication cost.

\newpage
\section{ADDITIONAL NUMERICAL EXPERIMENTS}
\label{sec:numerical_details}
Unless otherwise stated, all experiments were conducted on a computing server equipped with an Intel\textsuperscript{\textregistered} Xeon\textsuperscript{\textregistered} Gold 5220R processor at $2.20 \, GHz$. 
The code used for the experiments can be found at the following online repository\footnote{\url{https://github.com/axelroques/PerCDL}.}.

As is customary in the field of time series representation~\citep{dupre2018multivariate, power2023using, liang2024personalized, jas2017learning, moreau2018dicod, morup2008shift, shaheen2017constrained, song2018spike}, no data separation into training/validation/test sets was implemented.
The ambition was to reconstruct a given dataset as truthfully as possible rather than investigate the generalization capabilities of the method (within-signal generalization is somewhat assumed implicitly due to the structured nature of the physiological activity recorded).

\paragraph{Interesting Implementations Detail.}
We have observed that slightly centering the common atoms after the IPU step by considering $\phi_{new}=\mathcal{T}_{\bar{\psi},\sigma}\phi_{old}$ with $\bar{\psi}=\sum_{i=1}^S \psi_{a_i}$ has a regularizing effect: it smooths the common atom. 
This optional additional step was employed in the experiments.

\paragraph{Time Complexity.}
In order to give a better idea of the temporal complexity of the different methods described in this work, the next paragraph details the actual computation times observed on a personal laptops (MacBook Pro, M3 Max chip) for a subset of the experiments presented in this section.
\begin{enumerate}
    \item Synthetic dataset ($K=2$, $N=500$, $n_{steps}=5$):
    \begin{enumerate}
        \item $S=2$; IndCDL: $0.29s$; PopCDL: $0.17s$; PerCDL: $5.86s$.
        \item $S=16$: IndCDL: $2.20s$; PopCDL: $0.32s$; PerCDL: $7.33s$.
        \item $S=128$: IndCDL: $17.57s$; PopCDL: $2.79s$; PerCDL: $21.50s$.
        \item $S=1024$: IndCDL: $138.57s$; PopCDL: $18.46s$; PerCDL: $127.25s$.
    \end{enumerate}
    \item Gait dataset ($S=150$, $N=15457$, $n_{steps}=5$); IndCDL: $26.86s$; PopCDL: $5.94s$; PerCDL: $48.37s$.
    \item ECG dataset ($S=4465$, $N=1000$, $n_{steps}=20$): IndCDL: $2465.09s$; PopCDL: $292.67s$; PerCDL: $1731.81s$.
\end{enumerate}

\subsection{Validation on Synthetic Data}
An extensive list of experiments were conducted on synthetic data to verify the theoretical findings presented above. 
The toy dataset consisted of $S$ time series of equal-length $N$ containing $r=3$ repetitions of $K=2$ common pattern of length $L$, distributed randomly within each time series under the no overlap condition. 
The shared structure was transformed using random personalization parameters drawn uniformly random from $(-1, 1)^M$ and projected onto $\Theta$ in each time series with the transformation function described in section \ref{sec:transformation_function}. 
In the remainder of this section, the personalization hyper-parameters were fixed with $D=3$ and $W=10$. 
The sparsity balance parameter was fixed at $\lambda = 10^{-2}$.

\paragraph{Finding the Activations.}
In this section, the CSC step of PerCDL was evaluated. 
PerCDL was initialized with the true common atoms and personalization parameters, which remained fixed for the duration of the experiment. 
\Cref{fig:app:activations} presents the results of the optimization process.

\begin{figure*}[!hbt]
    \centering
    \includegraphics[width=0.8\textwidth]{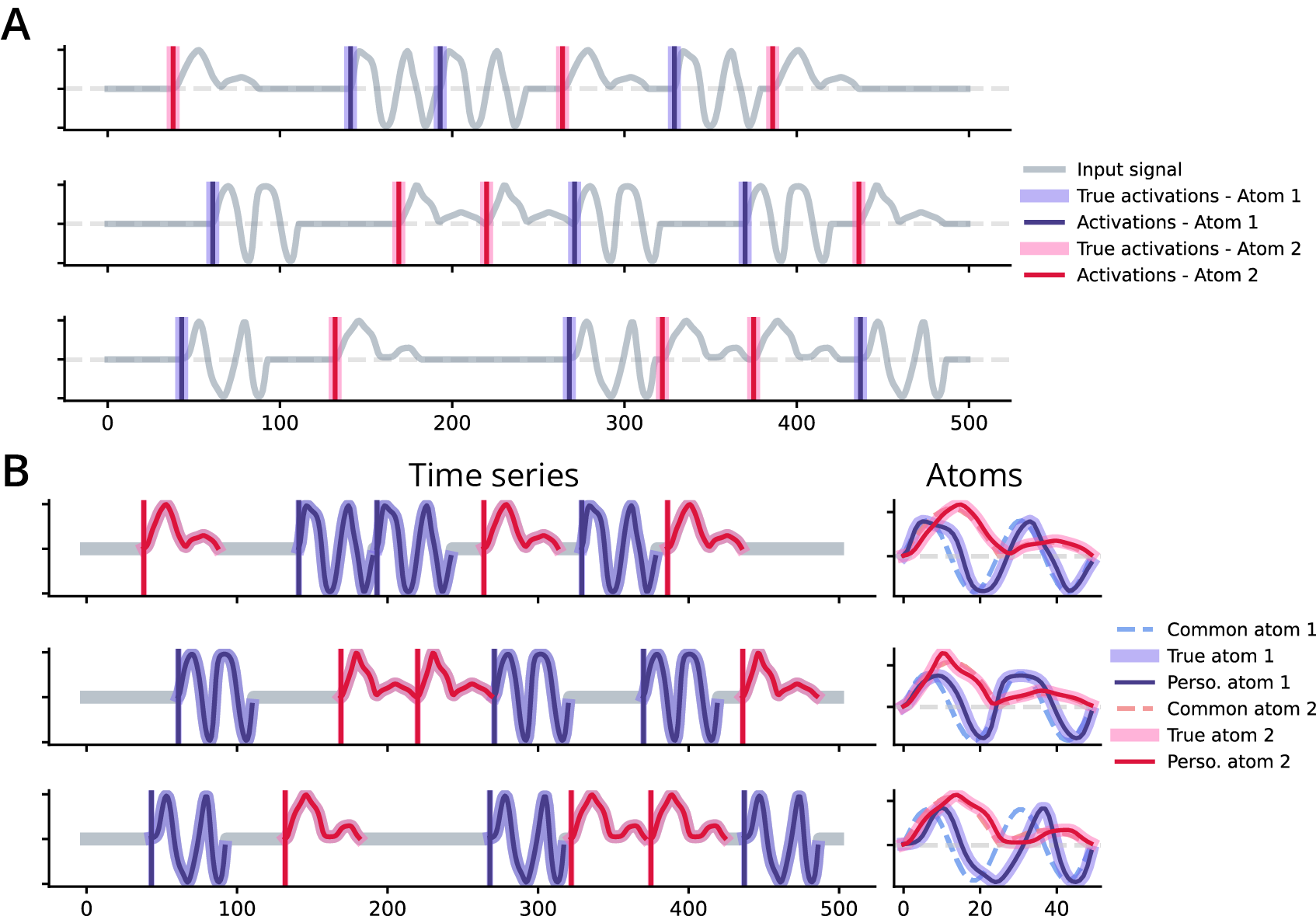}
    \caption{
    \textbf{Recovery of the Activations with Fixed Common Atoms and Personalization Parameters}. 
    Starting from known common atoms and personalizations parameters, PerCDL successfully recovers the true activations (A, solid and transparent vertical lines).
    Reconstruction of the dataset, as well as the common and personalized dictionaries are shown in B.
    Dataset parameters: $N=500$, $S=3$, $L=50$, $D=3$, $W=10$.
    Optimization parameters: $n_{steps}=250$, step size of $10^{-3}$. 
    }
    \label{fig:app:activations}
\end{figure*}

Our findings suggest that PerCDL can effectively recover a set of known activations when the common atoms and the personalization parameters are fixed.

\paragraph{Finding the Common Atom.}
In this section, the CDU step of PerCDL was evaluated. PerCDL was initialized with the true activations and personalization parameters, which remained fixed for the duration of the experiment.
The common dictionary was initialized with the personalized atoms of the first signal. 
\Cref{fig:app:validation_common} presents the results of the optimization process.

\begin{figure*}[!hbt]
    \centering
    \includegraphics[width=0.9\textwidth]{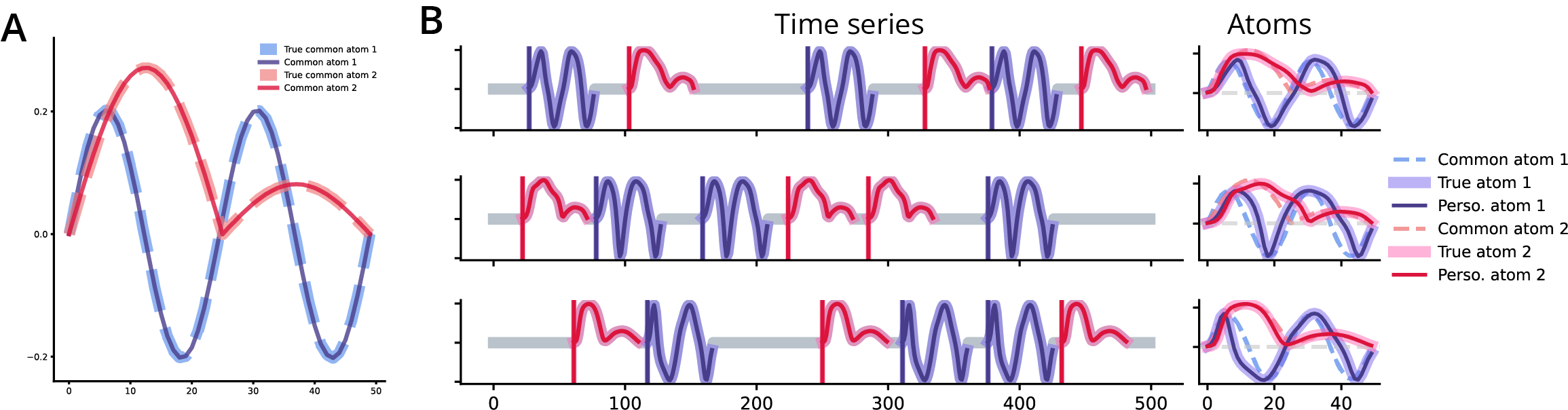}
    \caption{
    \textbf{Recovery of the Common Atoms with Fixed Activations and Personalization Parameters}. 
    Starting from known activations and personalization parameters, PerCDL successfully recovers the true common atoms (A, solid and dashed colored lines). 
    Reconstruction of the dataset, as well as the common and personalized dictionaries are shown in B.
    Dataset parameters: $N=500$, $S=3$, $L=50$, $D=3$, $W=10$.
    Optimization parameters: $n_{steps}=250$, step size of $10^{-3}$. 
    }
    \label{fig:app:validation_common}
\end{figure*}

Our findings suggest that PerCDL can effectively recover a set of known common atoms when the activations and the personalization parameters are fixed. 

\paragraph{Finding the Personalization Parameters.}
In this section, the IPU step of PerCDL was evaluated. 
PerCDL was initialized with the true activations and common atoms, which remained fixed for the duration of the experiment.
\Cref{fig:app:perso} presents the results of the optimization process.

\begin{figure*}[!hbt]
    \centering
    \includegraphics[width=0.8\textwidth]{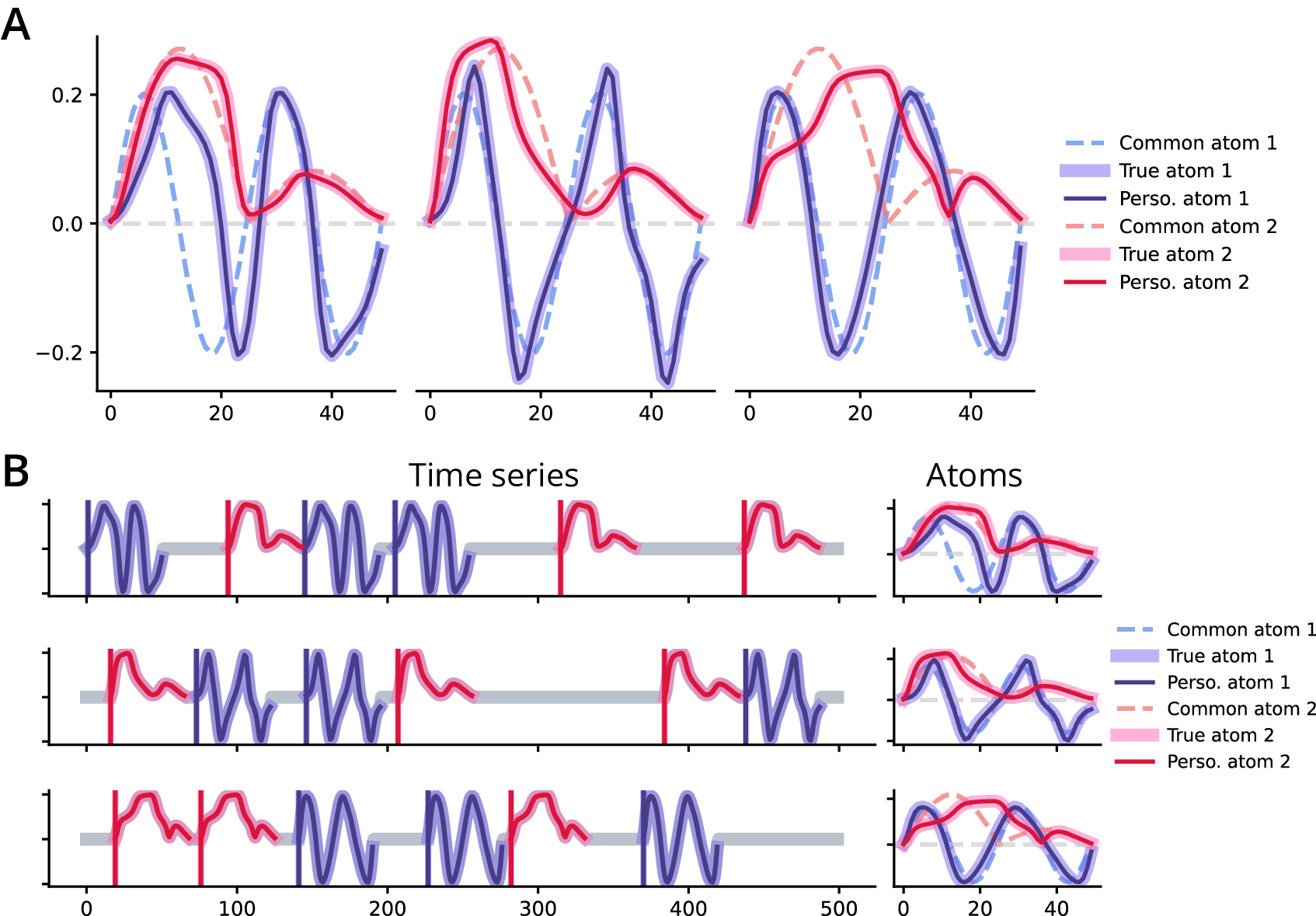}
    \caption{
    \textbf{Recovery of the personalization Parameters with Fixed Activations and Common Atoms}. 
    Starting from known activations and common atoms, PerCDL successfully recovers the true personalization parameters (A, solid and dashed colored lines). 
    Reconstruction of the dataset, as well as the common and personalized dictionaries are shown in B.
    Dataset parameters: $N=500$, $S=3$, $L=50$, $D=3$, $W=10$.
    Optimization parameters: $n_{steps}=250$, step size of $10^{-3}$. 
    }
    \label{fig:app:perso}
\end{figure*}

Our findings suggest that PerCDL can effectively recover a set of known personalization parameters when the activations and the common atoms are fixed.

\subsection{Sensitivity Analysis}
\label{sec:sensitivity_analysis}
The sensitivity of the transformation $f$ \eqref{eq:transformation_warping} with respect to its hyper-parameters $D, W$ was investigated. 
A synthetic dataset was generated ($N=500$, $S=3$) with $(D,W)=(3,10)$ by applying a transformation $f(\cdot,\bm{a})$ on the two fixed common atoms shown in \cref{fig:app:validation_common}, with random parameters $\bm{a}$ uniformly sampled in the interval $[-1,1)^M$. 
PerCDL was then initialized with the known common atoms and fixed activations, and the effect of both hyper-parameters on the reconstruction error was studied using a grid search. 
Results are presented in \Cref{fig:sensitivity_analysis}.

\begin{figure}[!htb]
  \begin{minipage}[c]{0.5\textwidth}
  	\centering
    \includegraphics[width=0.95\textwidth]{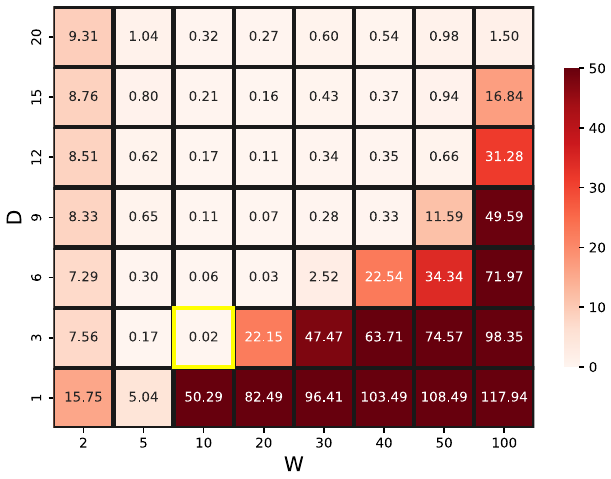}
  \end{minipage}\hfill
  \begin{minipage}[c]{0.5\textwidth}
    \caption{
    \textbf{Sensitivity Analysis}.
    Reconstruction error (Euclidean distance) versus the choice of time warping hyper-parameters $D, W$ on a synthetic dataset.
    The correct hyper-parameters used to generate the data are $(D,W)=(3,10)$, marked with a yellow box.
    }
    \label{fig:sensitivity_analysis}
  \end{minipage}
\end{figure}

Our results indicate that larger $D$ reduces the sensitivity to $W$, and generally leads to good performance.
This is unsurprising given the known benefits of over-parameterization~\cite{zhang2023going}.

\subsection{Convergence Toward the Common Atoms}
\label{sec:app:conv}
The ability of the different methods in identifying the common structures was evaluated on synthetic datasets. 
All datasets tested contained a varying number of time series, from $S=1$ to $S=1024$, of equal length $N=1000$, with $r=3$ repetitions of $K=2$ common atoms of length $L=50$, without overlap. 
Quantification results were presented in the main text. 
\Cref{fig:supp:synth_conv} illustrates qualitatively the convergence toward the common atoms with an increase in the number of signals. 

\begin{figure}[!hbt]
\centering
  \includegraphics[width=.8\linewidth]{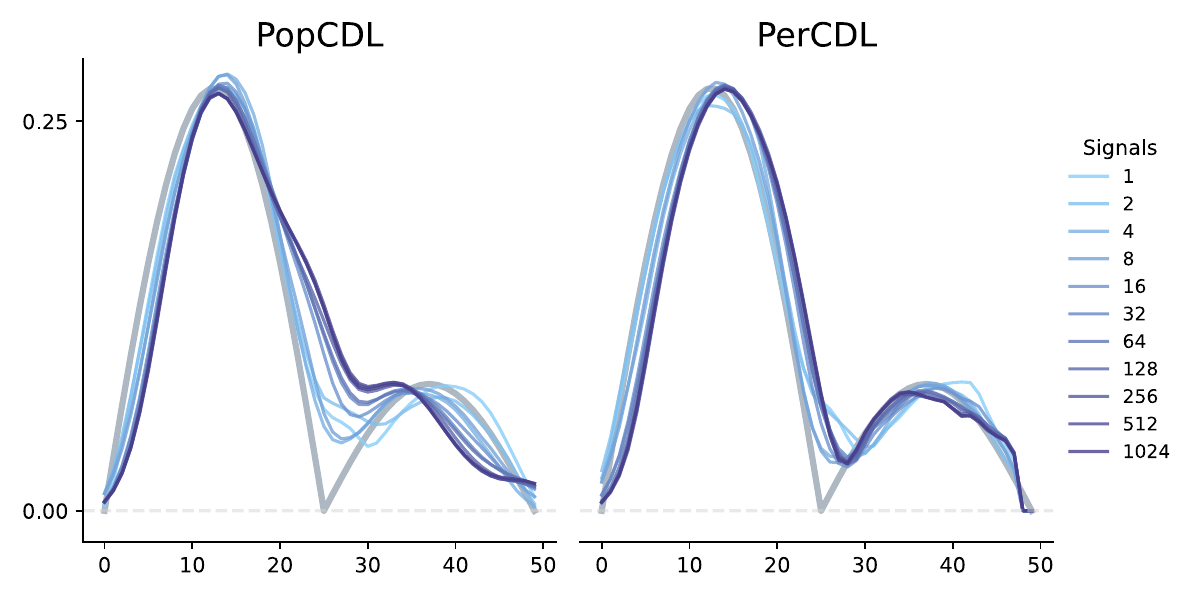}
\caption{
    \textbf{Recovery of the Common Atoms in Synthetic Data as a Function of $S$}. 
}
\label{fig:supp:synth_conv}
\end{figure}

PerCDL efficiently recovers the shared information with an increase in the dataset's size, contrary to PopCDL.


\subsection{Robustness to Noise}
\label{sec:robustness_noise}
The robustness of the different methods was evaluated on synthetic data contaminated with Gaussian or impulse noise. 
All synthetic datasets tested contained $S=32$ times series of equal length $N=1000$, with $r=3$ repetitions of $K=2$ common atoms of length $L=50$, without overlap. 
Results with impulse noise were presented in the main text. The following paragraph will investigate the impact of Gaussian noise. 

\Cref{fig:app:gaussian_noise}(A) presents the evolution of the distance to the personalized atoms as a function of the signal to Gaussian noise ratio (SNR). 
Two distance metrics were tested: the Euclidean distance and the DTW. 
Only the latter was included to ensure a fair comparison between methods despite potential discrepancies in temporal offset between the true and the learned atoms, which may arise due to different activation matrices. 
We observe that no matter the metric or the SNR level, the atoms learned through PopCDL and PerCDL are closer to the true underlying atoms compared to IndCDL under noisy conditions (SNR$<20$).

\Cref{fig:app:gaussian_noise}(B) presents the evolution of the reconstruction error---computed as the Euclidean distance between the input signal and the reconstruction using the activations and atoms identified following the optimization process---as a function of the SNR. 
Our results indicate that both PopCDL and PerCDL degrade relatively less than IndCDL under high noise levels ($SNR<20$).

\begin{figure}[!hbt]
\centering
\subfloat[\textbf{Distance to the personalized atoms}]{
  \includegraphics[width=.4\linewidth]{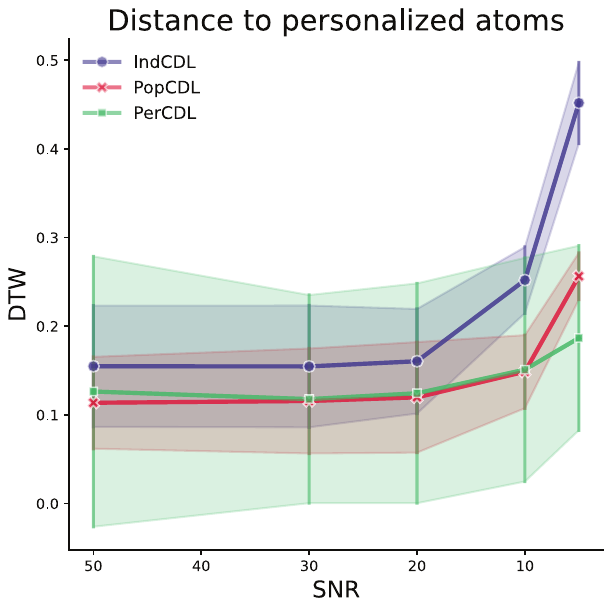}
}\hspace{1em}
\subfloat[\textbf{Reconstruction error}]{
  \includegraphics[width=.4\linewidth]{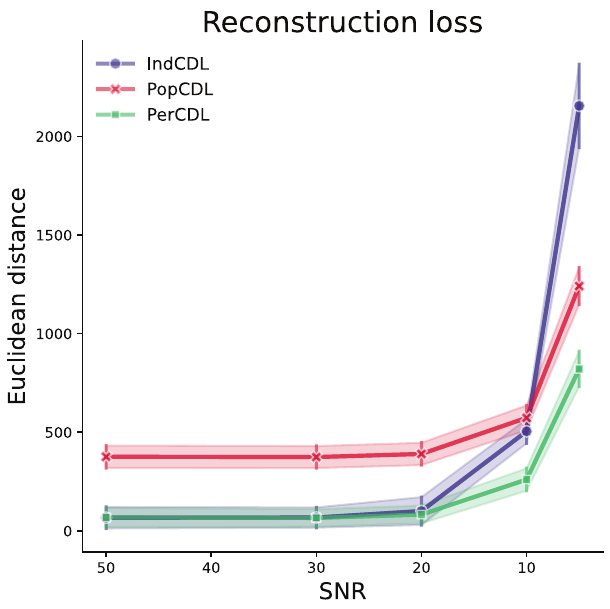}
} 
\caption{
    \textbf{Robustness of the Different Methods in the Presence of Gaussian Noise}. 
    Personalization hyper-parameters: $D = 3$, $W = 10$.
}
\label{fig:app:gaussian_noise}
\end{figure}

\subsection{Application to Locomotion Data}
\label{sec:app:locomotion}
The interested reader may find the complete dataset used for the experiment at the following online repository\footnote{\url{https://github.com/deepcharles/gait-data}.} and a detailed description of the experiment conducted in the original article~\citep{truong2019}. 
Briefly, three groups of participants (Healthy, Orthopedic, and Neurological) underwent the same basic walking protocol while wearing an inertial measurement unit (IMU) on the dorsal face of their foot. 
Two brands of IMUs were employed (XSens™: $100$ $Hz$, device dimensions $47 \times 30 \times 13$ $mm$; Technoconcept\textregistered: $100$ $Hz$, device dimensions $49 \times 38  \times 19$ $mm$). 
Participants in the orthopedic group suffered either from lower limb osteoarthrosis or from cruciate ligament injury. 
The neurological group was composed of four specific pathologies: hemispheric stroke, Parkinson’s disease, toxic peripheral neuropathy and radiation-induced leukoencephalopathy.

During testing, $50$ participants from each group were chosen at random. 
For each, we focused specifically on the angular velocity of the right foot in the sagittal plane, as is customary in gait analysis~\citep{voisard2023automatic}. 
This resulted in a dataset of size $S=150$, with $N=15457$.
A single common atom ($K=1$) was searched for, of size set to $L=100$ to match the length of the gait cycle.
Personalization hyper-parameters were set to $D=4$, $W=15$ following a brief fine-tuning process.
Optimization parameters were left to default, i.e., $n_{\text{steps}}=25$ with a step size of $10^{-3}$, and $\lambda = 10^{-2}$.
The common dictionary was initialized with the gait cycle of a random participant from the Healthy group.


\paragraph{Gait Segmentation.}
\label{sec:app:gait_segmentation}
PerCDL's gait segmentation performance (i.e., the location of its activations) was compared to the state-of-the-art gait segmentation method \citep{voisard2023automatic}. 
Briefly, \citeauthor{voisard2023automatic} used a sophisticated template-based matching algorithm to identify gait cycles event in accelerometry signals. 
To compare the two methods in a fair manner, we assessed whether they could identify the start of a gait cycle, with a tolerance margin set relatively high ($\tau_{\text{tol}}=100$ samples or roughly one second) as the precise beginning of a gait cycle is an ambiguous concept. 
The number of true positives, i.e., the number of correctly identified gait cycles, the number of false negatives, i.e., the number of gait cycles that were undetected, and the number of false positives, i.e., the number of wrong detection, were computed. 
The dataset contained expert annotations which acted as a ground truth. 
Finally, we observed that PerCDL was sometimes too sensitive to small movements of the body preceding and succeeding locomotion: to prevent bias in the results, activations with an absolute value smaller than $\tau_{\text{act}}=2\%$ of the average of non-zero values were removed. 
Results are presented in \Cref{tab:app:gait_segmentation}. 

\begin{table*}[!hbt]
\centering
\caption{
    \textbf{Segmentation Performance of PerCDL vs. SOTA}. 
    The gait signals of $150$ participants were used. 
    True positive rate: rate of correctly identified gait cycles. 
    False negatives rate: rate of undetected gait cycles. 
    Proportion of false positives: number of wrong detection divided by the total number of gait cycles.
}
\begin{tabular}{l|ll}
                                       & \textbf{Voisard et al.} & \textbf{PerCDL} \\ \hline
\textbf{True positives rate}           & 0.959                   & 0.988           \\
\textbf{False negatives rate}          & 0.040                   & 0.012           \\
\textbf{Proportion of false positives} & 0.004                   & 0.033          
\end{tabular}
\label{tab:app:gait_segmentation}
\end{table*}

Our findings suggest that PerCDL is competitive with the state-of-the-art in detecting the gait cycles. 
In the dataset evaluated, it offers a better detection than \citet{voisard2023automatic}, although with a slight increase in the number of false detection. 
Note that this effect could be improved by fine-tuning the activation threshold $\tau_{\text{act}}=2\%$, which was not done in this simple experiment.


\paragraph{IndCDL vs. PopCDL vs. PerCDL.}
This section compares the three methods on the gait dataset. 
Initialization was identical for IndCDL, PopCDL and PerCDL. 
Normalized reconstruction errors for the different methods are as follows:
$\text{IndCDL} = 0.055$, $\text{PopCDL} = 0.109$, $\text{PerCDL} = 0.063$, $\text{IndCDL barycenter} = 0.116$.
IndCDL's barycenter corresponds to the reconstruction of the dataset with the Euclidean barycenter of IndCDL's results. 

Because IndCDL essentially over-fits each signal, it presents the smallest reconstruction error. 
However, it fails to capture the global structure, as a reconstruction of the dataset with the averaged atom identified with IndCDL yields the highest reconstruction error (referred to as IndCDL's barycenter). 
By construction, PopCDL captures the shared structure but lacks reconstruction power. 
Finally, PerCDL builds from the shared structured to fit each signal specifically, thus maintaining both global and local information, i.e., harnessing the structure representation capabilities from PopCDL while reaching reconstruction results similar to IndCDL.  

In \Cref{fig:IndCDL_PopCDL}, we observe that the average of IndCDL individual atoms and the common atoms of PopCDL are similar.
\begin{figure*}[!hbt]
\centering
  \begin{tabular}[b]{c}
    \includegraphics[width=.29\linewidth]{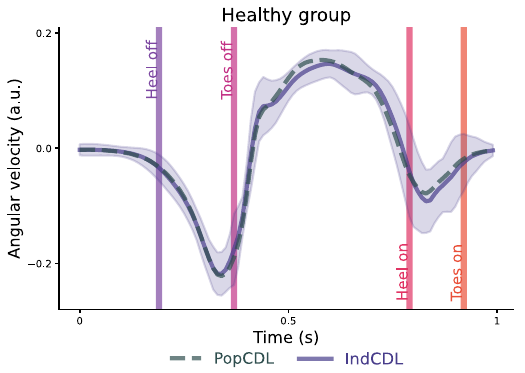} \\
    \small (a) Healthy group.
  \end{tabular} \qquad
  \begin{tabular}[b]{c}
    \includegraphics[width=.29\linewidth]{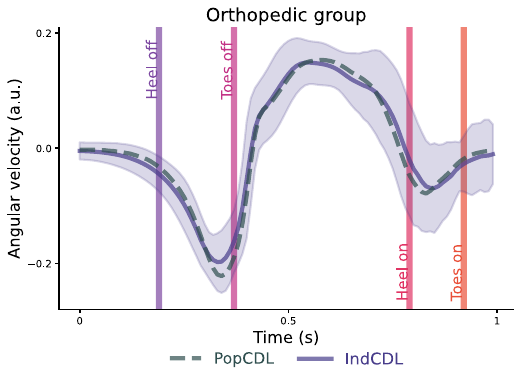} \\
    \small (b) Orthopedic group.
  \end{tabular}
  \begin{tabular}[b]{c}
    \includegraphics[width=.29\linewidth]{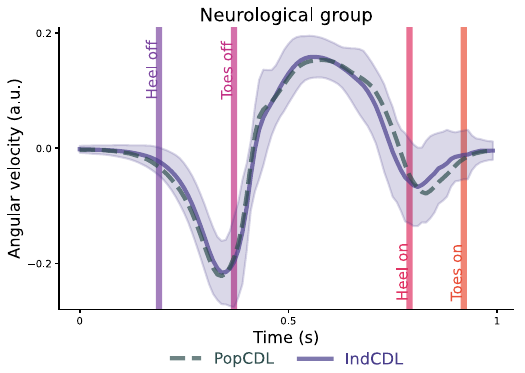} \\
    \small (c) Neurological group.
  \end{tabular} \qquad
\caption{
    \textbf{Comparison of IndCDL and PopCDL on the Healthy, Orthopedic and Neurological Groups}. 
    The shaded region corresponds to $2$ standard deviations of the individual atoms returned by IndCDL, and the vertical colored lines timestamp the different phases of the gait cycle. 
    The solid lines of IndCDL are its Euclidean barycenter on a given group (Healthy, Orthopedic, Neurological).
}
\label{fig:IndCDL_PopCDL}
\end{figure*}


\paragraph{IndCDL.}
Let us try to conduct a similar analysis as the one presented in \Cref{sec:experiments:gait_analysis} with IndCDL. 
Initialization was identical to that of PerCDL. 
\Cref{fig:app:gait_comp_indcdl} presents the group variability analysis of the atoms identified. 

\begin{figure}[!hbt]
\centering
  \includegraphics[width=.8\linewidth]{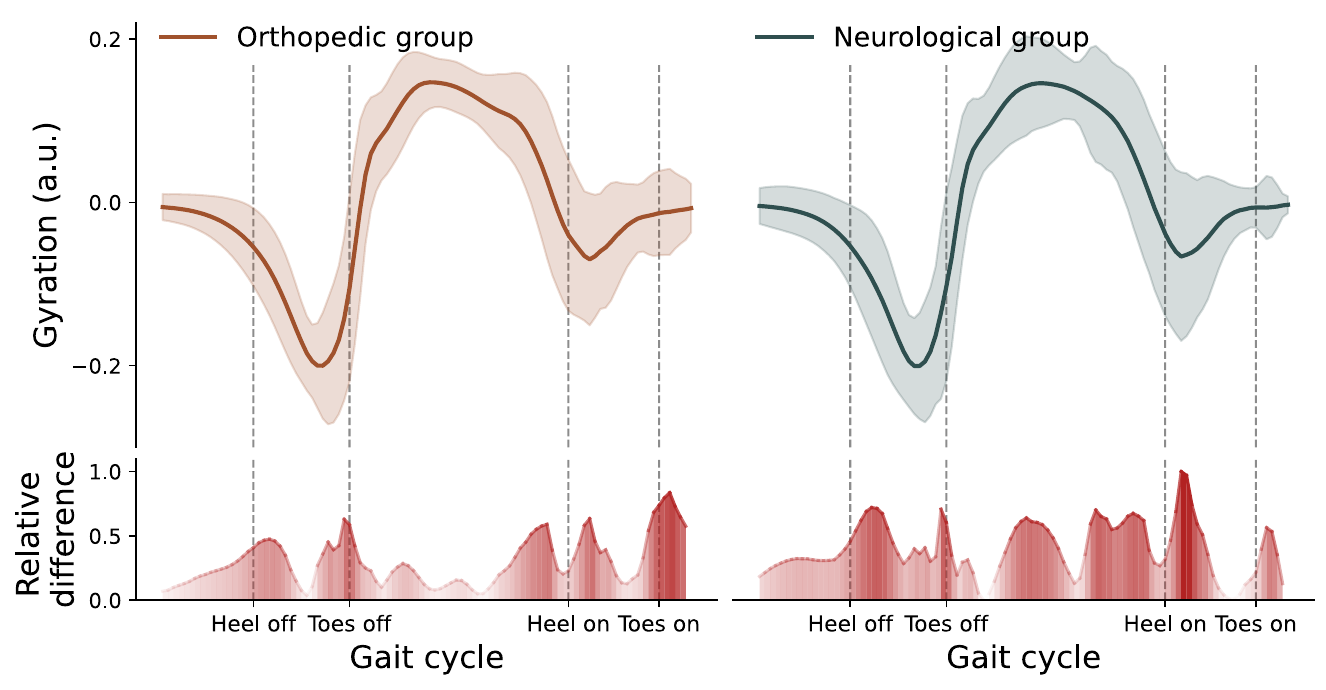}
\caption{
    \textbf{Application of IndCDL to Locomotion Data}.
    Atoms learned by PerCDL in the Orthopedic (solid brown line) and Neurological (solid green curve) groups, with two standard deviations (shaded areas). Below each gait cycle, the red-colored curve represents the relative variability in the gait cycle for the current population compared with the Healthy group.
}
\label{fig:app:gait_comp_indcdl}
\end{figure}

A striking result is the presence of ``parasitic variability'' at the start and end of each atom. 
This variability is uninformative of the gait cycle, and simply corresponds to an overfit of each signal caused by IndCDL's learning scheme. 
This observation suggests that PerCDL is better suited than IndCDL for this kind of analysis.


\paragraph{Group-level analysis with PopCDL.}
When explicit groups can be identified in a dataset, one may be tempted to apply CDL to each group independently.
This reduces to applying PopCDL to each sub-population in the data separately and should intuitively yield a result somewhere between IndCDL and PopCDL. 
To test this idea, we ran PopCDL on each group from the gait dataset (\Cref{tab:app:group_popcdl}).

\begin{table}[!htb]
\centering
\caption{
    \textbf{Group-level Analysis with PopCDL}.
    The values correspond to the DTW distance between the common atoms from the two groups estimated independently using PopCDL.
}
\begin{tabular}{l|l|l|l}
 & \textbf{Healthy - Orthopedic} & \textbf{Healthy - Neurological} & \textbf{Orthopedic - Neurological} \\ \hline
\textbf{DTW distance} & 0.0531 & 0.0542 & 0.0367
\end{tabular}
\label{tab:app:group_popcdl}
\end{table}

Our results suggest that the common structures found by PopCDL on the different groups are highly similar (similar distance between Healthy vs. Orthopedic and Healthy vs. Neurological, and small distance between Orthopedic and Neurological).
This emphasizes the idea that in the context of physiological, it is often the variation around the common structures that gives interesting information about an individual; something that only PerCDL can capture, not PopCDL.


\paragraph{PerCDL's Personalized Atoms.}
The personalized atoms found by PerCDL were investigated. 
\Cref{fig:app:gait_personalization} shows an example of the personalized atom found in a single participant per pathology group. 

\begin{figure*}[!hbt]
\centering
  \begin{tabular}[b]{r}
    \includegraphics[width=.45\linewidth]{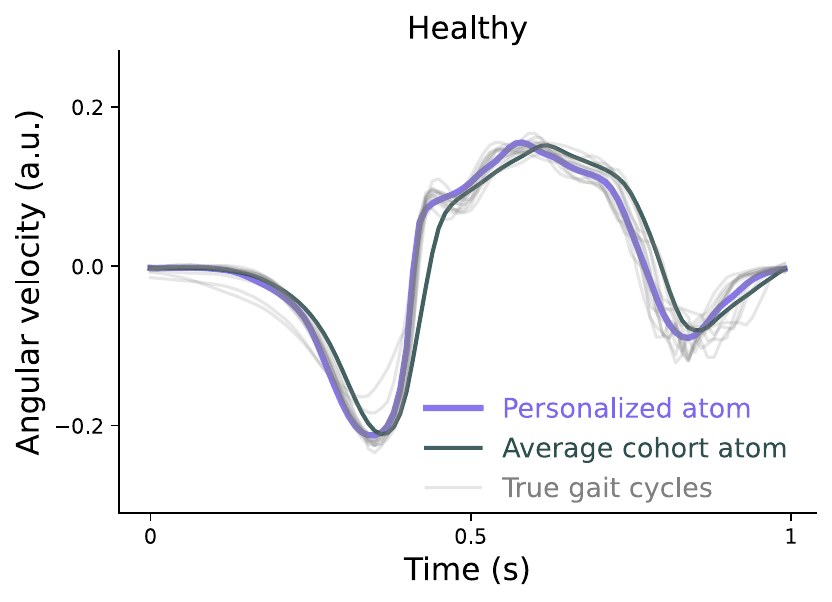} \\
    \small (a) Participant from the Healthy group.
  \end{tabular} \qquad
  \begin{tabular}[b]{l}
    \includegraphics[width=.45\linewidth]{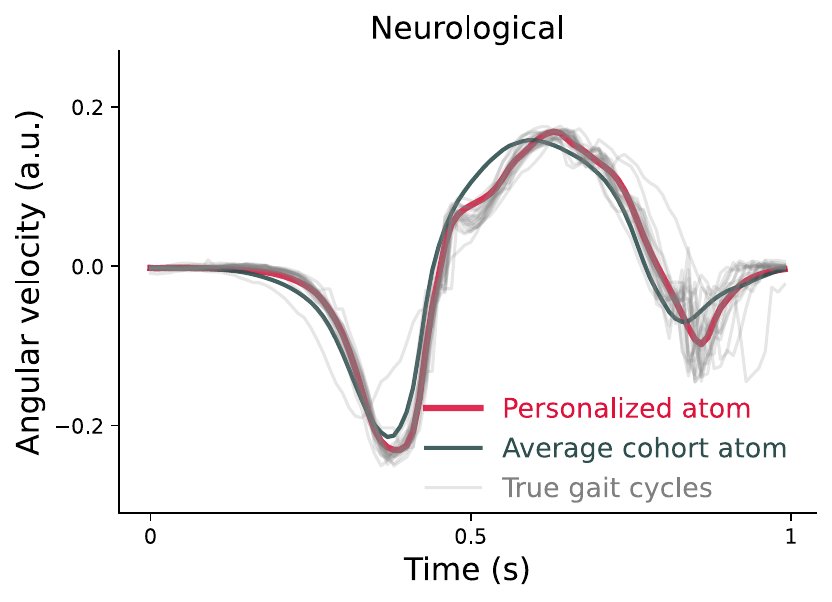} \\
    \small (b) Participant from the Neurological group.
  \end{tabular} \qquad
\caption{
    \textbf{Example of the Personalized Atom Found in a Single Participant per Pathology Group}. Personalized atoms identified by PerCDL (colored lines) in a single participant, average personalized atom per pathology group (bold black line), and all raw occurrences of the gait cycle in the participant's signal (light grey lines). PerCDL efficiently captures individual-specific variability.
}
\label{fig:app:gait_personalization}
\end{figure*}

We observe that the personalized atoms differ from the population's average and instead accurately match the raw gait cycles manually extracted from an expert's annotation. 
Thus, PerCDL is able to efficiently capture participant-specific variability in their gait cycle.

This results was further quantified in \Cref{tab:app:personalization}. 
As expected, the personalized atoms have the smallest distance to the true gait cycles, compared to the common atoms or the population average. 
This further demonstrates the efficiency of PerCDL's personalization step.
\begin{table*}[!hbt]
\centering
\caption{
    \textbf{Euclidean Distance Between the True Gait Cycles and the Atoms}. 
    The personalized atom corresponds to the personalized atom learned by PerCDL for the given subject. 
    The cohort personalization mean corresponds to the atom obtained after averaging all personalized atoms for the given pathology group. 
    The distance between the true cycles and the personalized atom is smaller than with all other atoms.
}
\begin{tabular}{ll|lll}
 &                                             & \textbf{Healthy} & \textbf{Orthopedic} & \textbf{Neurological} \\ \cline{2-5} 
 & True cycles vs. Personalized atom           & 0.160            & 0.143               & 0.184                 \\
 & True cycles vs. Cohort personalization mean & 0.305            & 0.274               & 0.302                 \\
 & True cycles vs. Common atom (PerCDL)        & 0.299            & 0.350               & 0.330                 \\
 & True cycles vs. Common atom (PopCDL)        & 0.192            & 0.481               & 0.554                
\end{tabular}
\label{tab:app:personalization}
\end{table*}

\newpage
\subsection{Application to ECG Data}
\label{sec:app:ECG_analysis}

IndCDL, PopCDL and PerCDL were evaluated on a public ECG dataset\footnote{The dataset is available online: \url{https://doi.org/10.13026/kfzx-aw45}.} (PTB-XL; \citealp{wagner2020ptb}).
A subset of $S=2650$ signals, each of dimension $=12$ (due to the $12$ ECG leads: $I$, $II$, $III$, $AVL$, $AVR$, $AVF$, $V1-V6$) and sampled at $100$ $Hz$, were randomly selected. 

Prior to the experiments, the hyperparameters for PerCDL were briefly tuned on a separate set of $20$ total subjects. 
The selected parameters set is as follows: $D=4$, $W=7$, $n_{steps}=20$, $K=1$, $L=65$. 
All methods were initialized with activations set to zero and the common dictionary was populated with a single ECG motif from a random subject from the healthy group (superclass ``NORM''). 

\paragraph{Atoms and Reconstruction.}
IndCDL, PopCDL and PerCDL were applied to the dataset.
\Cref{fig:app:ecg_atoms} presents the atoms identified by IndCDL and PerCDL for the $II$ derivation, while \Cref{fig:app:ecg_recon} illustrates the reconstruction of three $II$ signals with the different methods for patients in the ``Myocardial Infarction'' (MI) pathology group. 

\begin{figure*}[!hbt]
\centering
  \begin{tabular}[b]{c}
    \includegraphics[width=.95\linewidth]{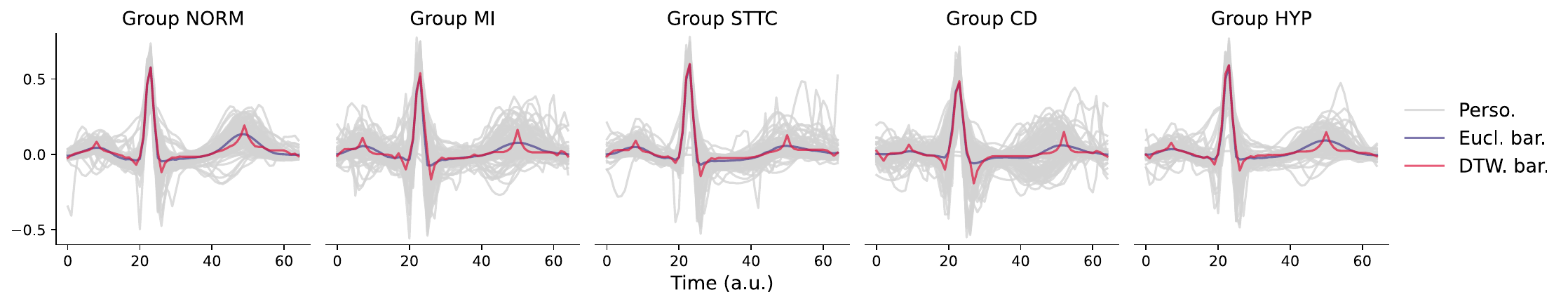} \\
    \small (a) Atoms identified with IndCDL.
  \end{tabular} \qquad
  \begin{tabular}[b]{c}
    \includegraphics[width=.95\linewidth]{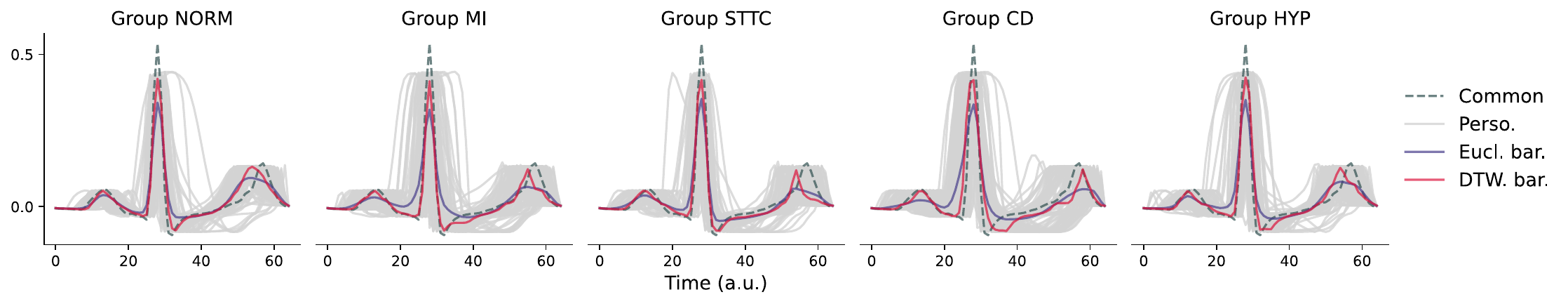} \\
    \small (b) Personalized atoms identified with PerCDL.
  \end{tabular} \qquad
\caption{
    \textbf{Atoms Identified by IndCDL and PerCDL in the Different Pathology Superclasses}. 
    (grey) Subject-specific atom. 
    (blue) Euclidean barycenter of all individual atoms. (red) DTW barycenter of all individual atoms.
}
\label{fig:app:ecg_atoms}
\end{figure*}

\begin{figure*}[!hbt]
\centering
  \begin{tabular}[b]{c}
    \includegraphics[width=.46\linewidth]{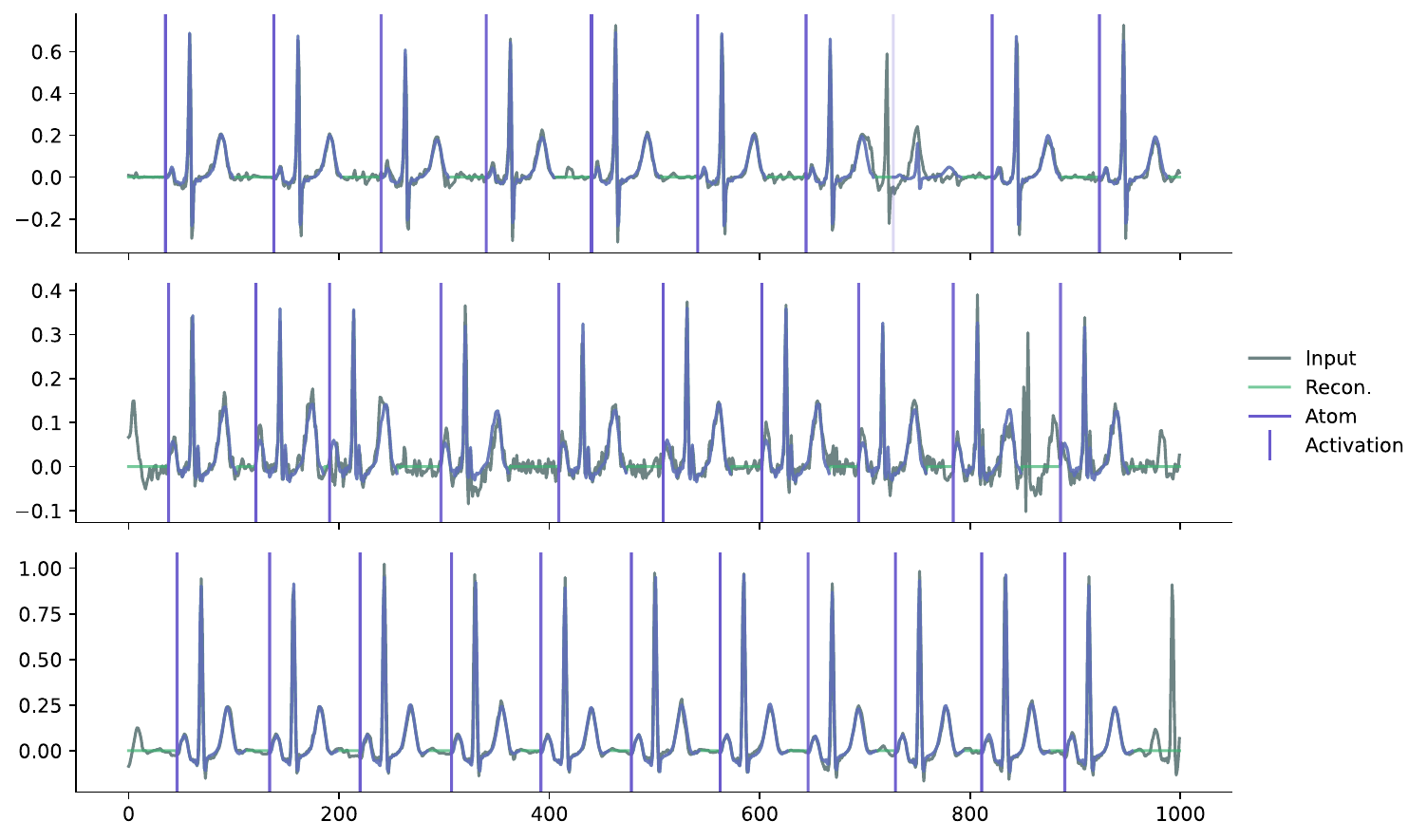} \\
    \small (a) Reconstruction with IndCDL.
  \end{tabular} 
  \begin{tabular}[b]{c}
    \includegraphics[width=.46\linewidth]{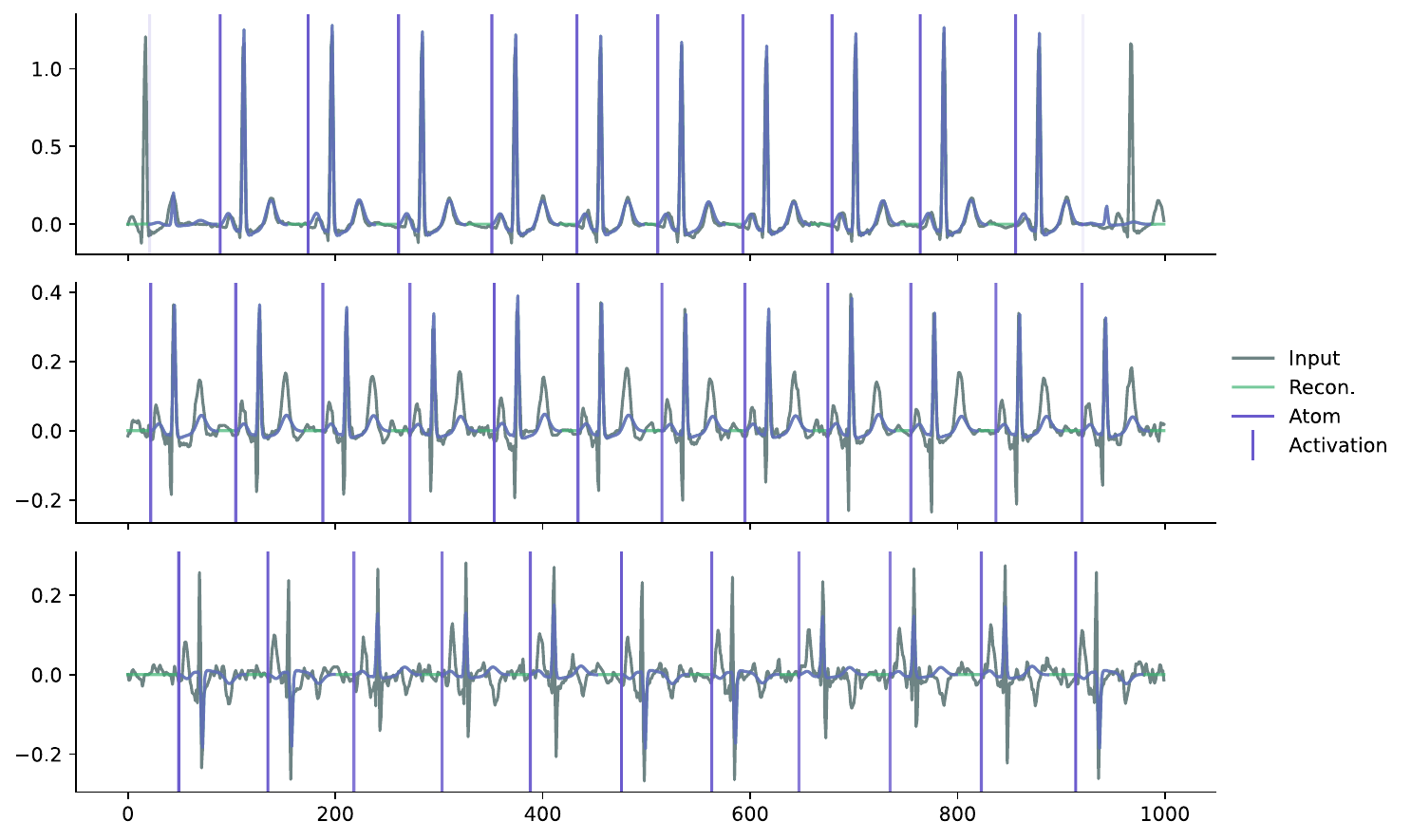} \\
    \small (b) Reconstruction with PopCDL.
  \end{tabular} 
  \begin{tabular}[b]{c}
    \includegraphics[width=.46\linewidth]{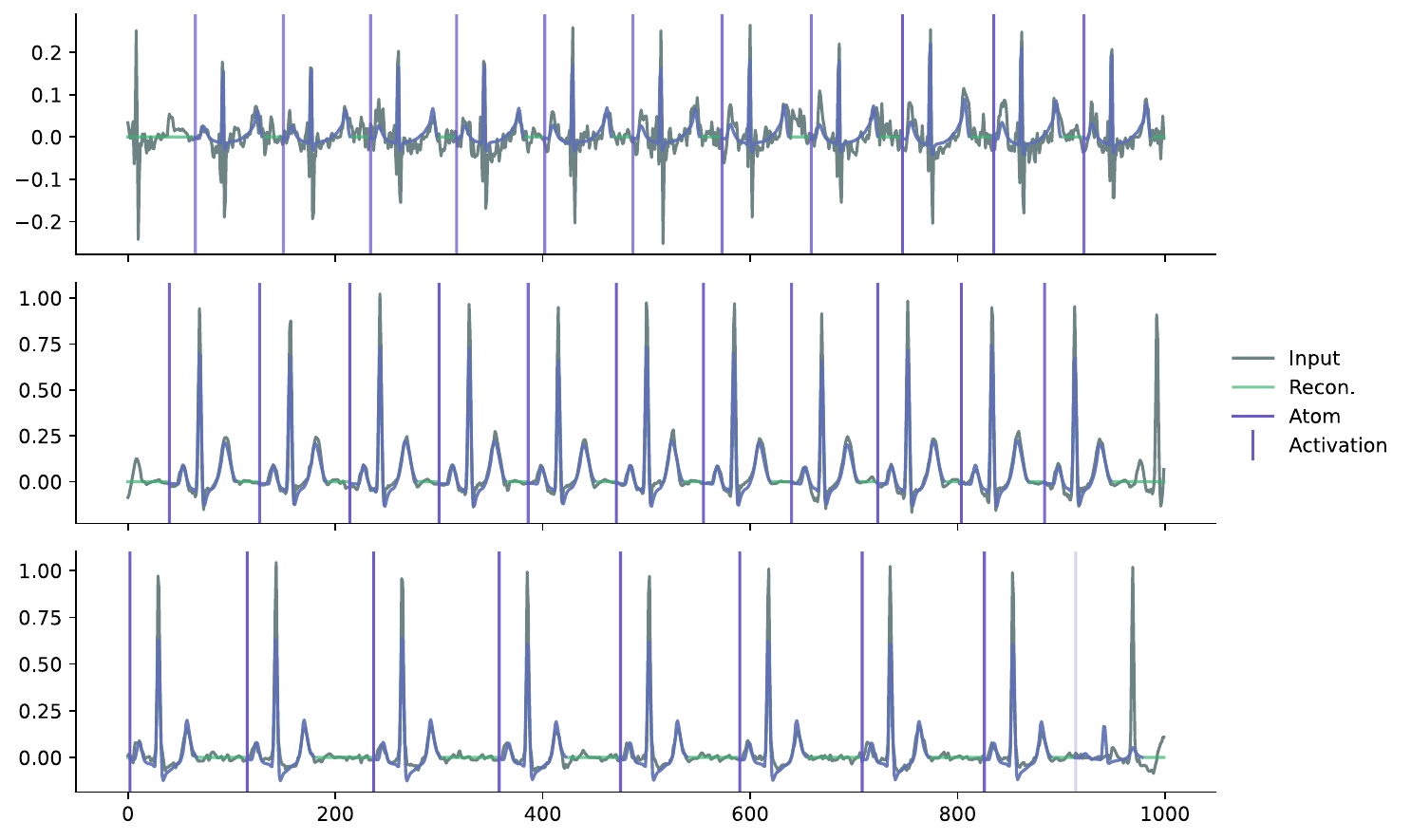} \\
    \small (c) Reconstruction with PerCDL.
  \end{tabular} 
\caption{
    \textbf{Reconstruction of Three Example Signals from Three Different Subjects in the MI Group}. 
    (grey) Input signals. 
    (green) Reconstruction. 
    (blue curves) Atoms. 
    (blue, vertical bars) Activations.
}
\label{fig:app:ecg_recon}
\end{figure*}

All methods were able to identify the ECG events efficiently. 
The dataset presented great variability within each superclass, as they contained multiple specific sub-pathologies. 
This explains the relatively cluttered plots for the atoms in \Cref{fig:app:ecg_atoms}. 
This effect is strikingly apparent for IndCDL, which has a tendency to overfit each signal, resulting in very ``volatile'' atoms that may contain noise (see \Cref{fig:app:ecg_recon}, a). 
Consequently, any effort to average all of the individual atoms is a recipe for failure.
On the complete opposite side of the spectrum, PopCDL only learns a common structure that cannot be used directly to model each signal, hence resulting in poor reconstructions. 
PerCDL constitutes an appealing middle ground, as it captures enough population-level structure to maintain a coherent shape across its atoms, while also capturing enough personal information to reconstruct efficiently each signal. On \Cref{fig:app:ecg_recon}, PerCDL appears to be more robust to noise than IndCDL, although at the price of smoother atoms.
Normalized reconstruction errors for the different methods are as follows:
$\text{IndCDL} = 0.176$, $\text{PopCDL} = 0.370$, $\text{PerCDL} = 0.282$, $\text{IndCDL barycenter} = 0.417$.

\paragraph{Personalization Parameters.}
We then wondered whether the personalization parameters learned by PerCDL could efficiently extract meaningful information from the dataset.
To that end, we tested whether each signal could be classified into its corresponding pathology superclass ($5$-class classification task) using only the personalization matrix.

The parameters matrix was separated into train and test sets ($70\%-30\%$ split).
A simple SVM (\textit{scikit-learn} Python library, parameters $C=1$, gamma='scale', kernel='rbf') was then trained on the personalization parameters to classify each signal. 
We obtained an AUC score (macro, ovo) of $0.864$. 
This result is comparable to the result of some deep learning techniques, namely \textsc{Wavelet+NN} (\textit{cf.}, \citealp{strodthoff2020deep}) that had an AUC score of $0.874$.

In order to try and provide a baseline for comparison, we used a k-nearest neighbor classifier (scikit-learn Python library, default parameters) with the raw personalized patterns as inputs.
This method obtained an AUC score of $0.728$ with the Euclidean distance metric and $0.699$ for the DTW distance metric.
Thus, PerCDL's personalization parameters offer a better classification performance than using the personalized atoms directly. 
Although the primary goal of our paper is not classification-related, our findings suggest nonetheless that the temporal warps employed in PerCDL effectively capture pathology-specific features. 

All in all, our findings suggest that the personalization parameters discover relevant characteristics of the input signals.

\end{document}